\documentclass[final,5p,10pt,twocolumn]{elsarticle}

\usepackage{amssymb}
\usepackage{amsmath}
\usepackage{graphicx}
\usepackage{amsmath,amsthm}
\usepackage{amssymb,empheq}
\usepackage{latexsym}
\usepackage{multirow}

\graphicspath{{./}{./Fig/}{./Figs/}}

\usepackage{algorithm}
\usepackage{algorithmic}
\usepackage{multirow,url,hyperref}




\biboptions{comma,square,sort}

\newcommand{\BA}{{\mathbf{A}}}

\newcommand{\Ba}{{\mathbf{a}}}
\newcommand{\By}{{\mathbf{y}}}

\newcommand{\Bepsilon}{{\mathbf{\epsilon}}}

\newcommand{\T}{{\!\top}}

\newcommand{\Bbeta}{{\boldsymbol{\beta}}}

\newcommand{\st}{{\,\,\mathrm{s.t.\,\,}}}

\newcommand{\comment}[1]{\textsl{\Large #1}}

\renewcommand{\comment}[1]{}

\newtheorem{thm}{Theorem}
%
%
\begin{document}

\begin{frontmatter}




\title{
          Fast Approximate $L_{\infty}$ Minimization: Speeding Up Robust Regression
      }

\author{Fumin Shen$^{2,1}$}
\author{Chunhua Shen$^1$}
\author{Rhys Hill$^1$}
\author{Anton van den Hengel$^1$}
\author{Zhenmin Tang$^2$}
\address{$^1$ School of Computer Science, 
The University of Adelaide, Australia
\\
$^2$ School of Computer Science and Technology, Nanjing University of Science and Technology,  China}

\begin{abstract}

    Minimization of the $L_\infty$ norm, which can be viewed as
    approximately solving the non-convex least median estimation
    problem, is a powerful method
    for outlier removal and hence robust regression. 
    However, current
    techniques for solving the problem at the heart of 
    $L_\infty$ norm minimization are slow, and therefore cannot scale
    to large problems. A new method for the 
    minimization of the $L_\infty$ norm is presented here, which provides a speedup of
    multiple orders of magnitude for data with high dimension. This
    method, termed \textsl{Fast $L_\infty$ Minimization}, allows robust
    regression to be applied to a class of problems which were
    previously inaccessible.  It is shown how the $L_\infty$
    norm minimization problem can be broken up into smaller
    sub-problems, which can then be solved extremely efficiently.
    Experimental results demonstrate the radical reduction in
    computation time, along with robustness against large numbers of
    outliers in a few model-fitting problems.

\end{abstract}

\begin{keyword}
                    Least-squares regression
                    \sep
                    outlier removal 
                    \sep 
                    robust regression 
                    \sep 
                    face recognition                    
\end{keyword}

\end{frontmatter}

\section{Introduction}

    Linear least-squares (LS) estimation, or linear $L_2$ norm
    minimization (denoted as $L_2$ for brevity throughout this paper),
    is widely used in computer vision and image analysis due to its
    simplicity and efficiency. Recently the $L_2$ norm technique has
    been applied to recognition problems such as face recognition
    \cite{LRC10,ShiEHS11}.  All of these methods are
    linear regression-based and the regression residual is utilized to
    make the final classification. However, a small number of outliers
    can drastically bias $L_2$, leading to low quality estimates. Clearly, robust
    regression techniques are critical when outliers are present.

    The literature contains a range of different approaches to robust
    regression. One commonly used method is the M-estimator framework
    \cite{M_Estimators1973,Huber1981}, where the Huber function is
    minimized, rather than the conventional $L_2$ norm.  Related
    methods include L-estimators \cite{LEstimator1987} and
    R-estimators \cite{REstimator1971}. One drawback of these methods
    is that they are still vulnerable to bad leverage outliers
    \cite{Rousseeuw2008}.
    By bad leverage points, we mean those observations who are outlying in $x$-space and do not follow the
    linear pattern of the majority.  
%
%
    Least median of squares (LMS) \cite{LMS1984}, least trimmed
    squares (LTS) \cite{Rousseeuw:1987:RRO:40031} and the technique
    using data partition and M-estimation \cite{park2012robust} have
    high-breakdown points.  Although each of these regression methods
    is, in general, more robust than $L_2$, they have rarely been
    applied to object recognition problems in computer vision due to
    their computational expense \cite{RLRC2012,Fidler2006}.

Another class of robust methods have been developed to remove these abnormal observations
from the measurement data. One of the most popular methods is RANSAC \cite{RANSAC} which
attempts to maximise the size of the {\em consensus set}. RANSAC relies on iterative random
sampling and consensus testing where the size of each sample is determined by the minimum
number of data points require to compute a single solution. RANSAC's efficiency is therefore
directly tied to the time and number of data points required to compute a solution. For
example, RANSAC has been successfully applied to multiview structure-from-motion and homography
estimation problems. However, it is unclear how to apply RANSAC to visual recognition problems,
e.g., face recognition, where face images are usually in a high-dimensional space.

Sim and Hartley \cite{Sim2006} proposed an outlier-removing method using the $L_\infty$ norm,
which iteratively fits a model to the data and removes the measurement with the largest
residual at each iteration. Generally, the iterative method can fail for the $L_2$ optimization
problems, however it is valid for a wide class of $L_\infty$ problems. Sim and Hartley proved
that the set of measurements with largest residual must contain at least one outlier. Hence
continuing to iterate eventually removes all the outliers. This method is shown to be
effective in outlier detection for multiview geometry problems \cite{Sim2006,Olsson2010}.

$L_\infty$ norm minimization can be time-consuming, since at each step one
needs to solve an
 optimization problem via Second-Order Cone Programming (SOCP) or
 Linear Programming (LP)  in the
 application of multi-view geometry \cite{Kahl2005,Ke2005}.
The software package SeDuMi \cite{sedumi99} provides solvers
for both SOCP and LP problems.

In this paper, we propose a fast algorithm to minimize the  $L_\infty$
norm  for approximating the least median estimation (denoted
as $L_\infty$ for brevity throughout the paper). Observing that the $L_\infty$ norm is
determined by only a few measurements, the optimization strategy \textit{column generation} (CG)
\cite{Lubbecke05} can be applied to reduce the main problem into a set of much smaller
sub-problems. Each sub-problem can be formulated as a Quadratic
Programming (QP) problem. Due to its relatively small size, the QP
problem can be solved extremely efficiently using customized solvers. 
In particular, 
we can generate solvers using the technique introduced by Mattingley and  Boyd  \cite{CVXGEN}.
This reduction results in a speedup of several orders of magnitude for high dimensional data.

This degree of speedup allows $L_\infty$ to be applied to problems which were previously
inaccessible. We show how the $L_\infty$ outlier removal technique can be applied to several
classification problems in computer vision. Representations of objects in this type of problems
are often derived by using $L_2$ to solve equations containing samples in the same class
(or other collaborative classes) \cite{LRC10,ShiEHS11}. Representation errors are then taken
into classification where the query object is assigned to the  class corresponding to the minimal residual.
\comment{previous sentence is not clear to me. RH} 
This method is shown
to be effective on data without occlusion or outliers. However, in
real-world applications,
measurement data are almost always contaminated by noises or outliers. Before a robust
representation can be obtained via linear estimation, outlier removal is necessary. The
proposed method is shown to significantly improve the classification accuracies in our
experiments for face recognition and iris recognition on several
public datasets.

\section{Related work}

Hartley and Schaffalitzky \cite{Hartley04b} seek a globally optimal solution for multi-view
geometry problems via $L_\infty $ norm optimization, based on the fact that many geometry
problems in computer vision have a single local, and hence global, minimum under the $L_\infty$ norm.
In contrast, the commonly used $L_2$ cost function typically has multiple local minimum \cite{Hartley04b,Sim2006}. 
This work has been extended by several authors, yielding a large set of geometry problems
whose globally optimal solution can be found using the $L_\infty$ norm (Olsson provides a summary \cite{Olsson07}).

It was observed that these geometry problems are examples of \textit{quasiconvex} optimization
problems, which are typically solved by a sequence of  SOCPs using a bisection (binary search)
algorithm \cite{Kahl2005,Ke2005}. Olsson \textit{et al.} \cite{Olsson07} show that the
functions involved in the $L_\infty$ norm problems are in fact \textit{pseudoconvex}
which is a stronger condition than quasiconvex. As a consequence, several fast algorithms
have been proposed \cite{Olsson07,Hongdong09}.

Sim and Hartley~\cite{Sim2006} propose a simple method based on the $L_\infty$ norm for
outlier removal, where measurements with maximal residuals are thrown away. The authors
prove that at least one outlier is removed at each iteration, meaning that all outliers
will be rejected in a finite number of iterations. However the method is not efficient,
since one need to solve a sequence of SOCPs. Observing that many fixed-dimensional
$L_\infty$ geometry problems are actually instances of LP-type problem, an LP-type
framework was proposed for the multi-view triangulation problem with outliers \cite{Hongdong07}.

Recently, the Lagrange dual problem of the $L_\infty$ minimization
problem posed in \cite{Hartley04b} was derived in \cite{Olsson2010}.
%
%
%
To further
boost the efficiency of the method, the authors of \cite{Olsson2010}
proposed an $L_1$-minimization algorithm for outlier removal. While
the aforementioned methods add a single slack variable and repeatedly
solve a feasibility problem, the $L_1$ algorithm adds one slack
variable for each residual and then solves a single convex program.
While efficient, this method is only successful on data drawn from
particular statistical distributions.

Robust statistical techniques, including the aforementioned robust
regression and outlier removal methods, can significantly improve the
performance of their classic counterparts. However, they have rarely
been applied in image analysis field, to problems such as visual
recognition, due to their computational expense. The M-estimator
method is utilized in \cite{RLRC2012} for face recognition and
achieved high accuracy even when illumination change and pixel
corruption were present. In \cite{Fidler06}, the authors propose a theoretical
framework combining reconstructive and discriminative subspace methods
for robust classification and regression. This framework acts on
subsets of pixels in images to detect outliers.

The reminder of this paper is organized as follows. In
Section~\ref{SEC:linf} we briefly review the $L_2$ and $L_\infty$
problems. In Section~\ref{SEC:remove}, the main outlier removal
algorithm is presented.  In Section~\ref{SEC:fast_alg}, we formulate
the $L_\infty$ norm minimization problem into a set of small
sub-problems which can be solved with high efficiency. We then apply
the outlier removal technique in Section~\ref{SEC:Exp} to several
visual recognition applications. Finally the conclusion is given in
Section~\ref{SEC:conclusion}.

\section{The $L_2$ and $L_\infty$ norm minimization problems}
\label{SEC:linf}
In this section, we briefly present the $L_\infty$ norm minimization problem in the form we use in several recognition problems. Let us first examine the $L_2$ norm minimization problem,
\begin{equation}
    \label{EQ:LS}
    \min_{\Bbeta }  \;  \sum_{i=1}^n
        ( \Ba_i \Bbeta - y_i )^2,
    \end{equation}
for which we have a closed-form solution\footnote{The closed-form solution can only be obtained when $\BA$ is over-determined, i.e., $n \geq d$. When $n < d$, one can solve the multicollinearity problem by ridge regression, or another variable selection method, to obtain a unique solution.}
\begin{equation}
    \label{EQ:2}
    {\Bbeta^\ast}_{ls} = (  \BA ^\T \BA )^{-1} \BA ^\T \By,
\end{equation}
where  $ \BA = [ \Ba_1^\top; \dots; \Ba_n^\top ] \in {\mathbb R}^{ n \times d} $
        is the measurement data matrix, composed of rows $ \Ba_i \in {\mathbb R}^{d} $, and usually $n \gg d$.
        The model's response is represented by the vector $ \By \in {\mathbb R}^n $ and 
        $ \Bbeta \in {\mathbb R}^d $
        stores the parameters to be estimated. 
 Note that in our visual recognition applications, both $\By$ and the columns of $\BA$ are images flattened to vectors.  According to the linear subspace assumption \cite{Basri00lambertianreflectance}, a probe image can be approximately represented by a linear combination of the training samples of the same class: $\By \approx \BA\Bbeta$. Due to its simplicity and efficacy, the linear representation method is widely used in various image analysis applications, e.g., \cite{LRC10,ShiEHS11,CRCzhanglei2011,Wright09,Yang20121104,Iris2011}.
 
The $L_2$ norm minimization aims to minimize the sum of squared residuals \eqref{EQ:LS}, where the terms $f_i\, (\Bbeta) = ( \Ba_i \Bbeta - y_i )^2,\; i \in I = \{ 1, \ldots, n \}$,
        are the squared residuals.  
$L_2$ norm minimization is simple and efficient, however it utilizes the entire data set and therefore can be easily influenced by outliers.

Instead of minimizing the sum of squared residuals, the $L_\infty$-$L_2$ norm minimization method seeks to minimize only the maximal residual, leading to the following formulation:
\begin{equation}
\label{EQ:linf}
\min_{\Bbeta }  \;  \max_i \;
        ( \Ba_i \Bbeta - y_i )^2, i \in I.
\end{equation}

This equation has no closed-form solution, however it can be easily
reformulated into a constrained formulation, with an auxiliary variable:
\begin{align}
\label{EQ:SOCP}
\min_{\Bbeta} \; & s \notag \\
\st \; & ( \Ba_i \Bbeta - y_i )^2 \leq s, \, \forall i \in I;
\end{align}
which is clearly a SOCP problem. If we take the absolute value of the residual in \eqref{EQ:linf}, we obtain
\begin{equation}
\label{EQ:abs}
\min_{\Bbeta }  \;  \max_i \;
        | \Ba_i \Bbeta - y_i |, \quad i \in I.
\end{equation}
leading to an LP problem
\begin{align}
\label{EQ:LP}
\min_{\Bbeta} \; & s \notag \\
\st \; &  \; \Ba_i \Bbeta - y_i  \; \leq s, \notag \\
& \; \Ba_i \Bbeta - y_i  \; \geq -s,\quad \text{for all } i \in I.
\end{align}

A critical advantage of the $L_\infty$ norm cost function is that it
has a single global minimum in many multi-view geometry problems
\cite{Hartley04b,Hongdong07}. Unfortunately, like $L_2$ norm
minimization, the $L_\infty$ norm minimization method is also
vulnerable to outliers. Moreover, minimizing the $L_\infty$ norm fits
to the outliers, rather than the data truly generated by the
model \cite{Sim2006}. Therefore, it is necessary to first reject
outliers before the estimation.

\section{Outlier removal via maximum residual }
\label{SEC:remove}
In \cite{Sim2006}, outlier removal is conducted in an iterative fashion by first minimizing the $L_\infty$ norm, then removing the measurements with maximum residual and then repeating. The measurements with maximum residual are referred to as the \textit{support set} of the minimax problem, i.e., 
\begin{equation}
\label{EQ:supp}
I_{supp} = \{\, i \in I \;| \; f_i ( {\Bbeta^\ast} ) = \delta_{opt} \},
\end{equation}
where 
$\delta_{opt} = \min_\Bbeta \max_{i \in I} f_i (\Bbeta) $ 
is the optimum residual of the minimax problem.
 \comment{This seems nonsensical? $L_2$ doesn't have anything to do with the equation above--- added description in ().} The outlier removal strategy does not work well for the general $L_2$ minimization problems (i.e., $\delta_{opt} = \min_\Bbeta \sum_i f_i (\Bbeta)$), because the outliers are not guaranteed to be included in the support set. In contrast, this strategy is valid for the $L_\infty$ minimization problems. For problem \eqref{EQ:linf} or \eqref{EQ:abs}, it is proved by the following theorems that the measurements with largest residual must contain at least one outlier. 

Suppose the index vector $I$ is composed of $I_{in}$ and $I_{out}$, the inlier and outlier sets respectively, and there exists $\delta_{in}$ such that $\min_\Bbeta \max_{i \in I_{in}} f_i (\Bbeta) < \delta_{in} $.  Then we have the following theorem.

\begin{thm}
\label{thm:1} 
\cite{Sim2006}  Consider the $L_\infty$ norm minimization problem \eqref{EQ:linf} or \eqref{EQ:abs} with the optimal residual  $\delta_{max} = \min_\Bbeta \max_{i \in I} f_i (\Bbeta)$. If there exists an inlier subset $I_{in} \subset I$ for which $\min_\Bbeta \max_{i \in I_{in}} f_i (\Bbeta) < \delta_{in} < \delta_{opt}$, then  the support set $I_{supp}$ must contain at least one index $i \in I_{out}$, that is, an outlier.
\end{thm}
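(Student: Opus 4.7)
The plan is to argue by contradiction: assume the support set $I_{supp}$ contains only inliers, i.e., $I_{supp} \subseteq I_{in}$, and then construct a feasible $\Bbeta$ whose maximum residual is strictly less than $\delta_{opt}$, contradicting the definition of $\delta_{opt}$ as the optimal value of the minimax problem on all of $I$.

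To set this up, let $\Bbeta^\ast$ denote a minimizer of $\max_{i \in I} f_i(\Bbeta)$, so that $f_i(\Bbeta^\ast)=\delta_{opt}$ for $i \in I_{supp}$ and $f_i(\Bbeta^\ast)<\delta_{opt}$ for $i \notin I_{supp}$. Let $\Bbeta^\ast_{in}$ denote a minimizer of $\max_{i \in I_{in}} f_i(\Bbeta)$, so that by hypothesis $f_i(\Bbeta^\ast_{in}) < \delta_{in} < \delta_{opt}$ for every $i \in I_{in}$. The key observation is that each $f_i$ in \eqref{EQ:linf} and \eqref{EQ:abs} is a convex function of $\Bbeta$ (being a squared affine function or the absolute value of an affine function), and hence continuous. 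I would then consider the segment $\Bbeta(t) = (1-t)\Bbeta^\ast + t\Bbeta^\ast_{in}$ for $t \in [0,1]$ and track how each $f_i(\Bbeta(t))$ behaves.

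For any index $i \in I_{in}$ (which under our contradiction assumption includes the entire $I_{supp}$), convexity of $f_i$ yields
\begin{equation*}
f_i\bigl(\Bbeta(t)\bigr) \le (1-t)\, f_i(\Bbeta^\ast) + t\, f_i(\Bbeta^\ast_{in}) \le (1-t)\,\delta_{opt} + t\,\delta_{in},
\end{equation*}
which is strictly less than $\delta_{opt}$ as soon as $t > 0$, since $\delta_{in} < \delta_{opt}$. For an index $i \notin I_{supp}$, we already have $f_i(\Bbeta^\ast) < \delta_{opt}$, so continuity of $f_i$ gives some $\epsilon_i > 0$ with $f_i(\Bbeta(t)) < \delta_{opt}$ for $t \in [0,\epsilon_i]$. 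Because $I$ is finite, the minimum $\epsilon = \min_{i \notin I_{supp}} \epsilon_i$ is positive, so choosing any $t \in (0,\epsilon]$ produces $\max_{i \in I} f_i(\Bbeta(t)) < \delta_{opt}$, contradicting optimality.

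The only substantive obstacle is making sure the two kinds of indices are handled uniformly, and that the convex combination argument is not spoiled by indices outside the support set where $f_i(\Bbeta^\ast_{in})$ could in principle be large (for outliers). That is precisely why the contradiction hypothesis $I_{supp} \subseteq I_{in}$ is essential: it restricts the convex combination bound to inlier indices, where $f_i(\Bbeta^\ast_{in}) < \delta_{in}$ is guaranteed. Outlier indices are dealt with purely by continuity, which suffices since they start strictly below $\delta_{opt}$ at $t=0$. The conclusion is that $I_{supp}$ cannot lie entirely within $I_{in}$, so it contains at least one index from $I_{out}$.
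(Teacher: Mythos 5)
Your proof is correct, and it takes a genuinely different route from the paper. The paper does not prove Theorem~\ref{thm:1} directly: it reduces it to Theorem~\ref{thm:2} (removing a non-support measurement leaves $\delta_{opt}$ unchanged, so if $I_{supp}\subseteq I_{in}$ one could strip away all outliers without lowering the optimum, contradicting $\min_\Bbeta\max_{i\in I_{in}}f_i(\Bbeta)<\delta_{in}<\delta_{opt}$), and Theorem~\ref{thm:2} itself is discharged by checking (strict) quasiconvexity of the residuals and citing Corollary~1 of Sim and Hartley. You instead argue by contradiction in one self-contained step: assuming $I_{supp}\subseteq I_{in}$, you move from $\Bbeta^\ast$ toward $\Bbeta^\ast_{in}$ along the segment $\Bbeta(t)$, use convexity of $f_i$ to pull the support-set residuals strictly below $\delta_{opt}$ (since $f_i(\Bbeta^\ast_{in})<\delta_{in}<\delta_{opt}$ there), and use continuity plus finiteness of $I$ to keep the non-support residuals below $\delta_{opt}$ for small $t>0$; this contradicts the optimality of $\delta_{opt}$. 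Your version is more elementary and avoids the external citation chain, at the cost of some generality: the convex-combination inequality you invoke is specific to convex $f_i$ (which the squared and absolute affine residuals of \eqref{EQ:linf} and \eqref{EQ:abs} are), whereas the Sim--Hartley machinery the paper leans on is designed for the broader class of strictly quasiconvex or pseudoconvex residuals arising in multiview geometry; for merely quasiconvex $f_i$ your bound would degrade to $f_i(\Bbeta(t))\le\max\{f_i(\Bbeta^\ast),f_i(\Bbeta^\ast_{in})\}=\delta_{opt}$ on the support set and the strict decrease would be lost. One small point worth making explicit is that you assume the minimax optimum is attained at some $\Bbeta^\ast$ (so that $I_{supp}$ is well defined and nonempty); the paper makes the same implicit assumption in \eqref{EQ:supp}.
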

Following Theorem 2 in \cite{Sim2006}, Theorem \ref{thm:1} can be easily proved based on the following theorem. 

\begin{thm} 
\label{thm:2}
\textit{ If $i_0$ is not in the support set $I_{supp}$ for the minimax problem \eqref{EQ:linf} or \eqref{EQ:abs}, then removing the measurement with respect to $i_0$ will not decrease the optimal residual $\delta_{opt}$. Formally,
if $i_0 \notin I_{supp}$, then}
\begin{equation}
\min_\Bbeta \max_{i \in I \setminus {i_0}} f_i (\Bbeta ) = \min_\Bbeta \max_{i \in I}  f_i (\Bbeta ) = \delta_{opt}.
\end{equation}
\end{thm}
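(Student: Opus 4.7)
The plan is to establish the equality by proving two inequalities. The easy direction, $\min_{\Bbeta} \max_{i \in I \setminus \{i_0\}} f_i(\Bbeta) \leq \delta_{opt}$, is immediate: dropping an index can only decrease the pointwise maximum, and taking the infimum over $\Bbeta$ then preserves the inequality. All the effort lies in the reverse direction.

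For the reverse inequality I would argue by contradiction. Suppose some $\Bbeta'$ achieves $\max_{i \in I \setminus \{i_0\}} f_i(\Bbeta') = \delta'_{opt} < \delta_{opt}$. I will exhibit a point $\Bbeta_\lambda$ with $\max_{i \in I} f_i(\Bbeta_\lambda) < \delta_{opt}$, contradicting the minimax optimality of $\delta_{opt}$. The candidate is a convex combination along the segment from an optimum $\Bbeta^*$ of the original minimax problem toward $\Bbeta'$, namely $\Bbeta_\lambda = (1-\lambda)\Bbeta^* + \lambda \Bbeta'$ with $\lambda \in (0,1]$ to be chosen small.

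Two facts drive the contradiction. First, each $f_i$ is convex, being either $(\Ba_i\Bbeta - y_i)^2$ or $|\Ba_i\Bbeta - y_i|$, so the maximum over any index subset is convex as well. Because $i_0 \notin I_{supp}$ we have $I_{supp} \subseteq I \setminus \{i_0\}$, hence $\max_{i \in I \setminus \{i_0\}} f_i(\Bbeta^*) = \delta_{opt}$, and convexity yields
\begin{equation*}
\max_{i \in I \setminus \{i_0\}} f_i(\Bbeta_\lambda) \leq (1-\lambda)\delta_{opt} + \lambda \delta'_{opt} < \delta_{opt}
\end{equation*}
for every $\lambda \in (0,1]$. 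Second, $f_{i_0}$ is continuous with $f_{i_0}(\Bbeta^*) < \delta_{opt}$ by the hypothesis that $i_0 \notin I_{supp}$, so there exists $\lambda_0 > 0$ such that $f_{i_0}(\Bbeta_\lambda) < \delta_{opt}$ for all $\lambda \in [0,\lambda_0]$. For any $\lambda \in (0,\lambda_0]$ the two bounds combine into $\max_{i \in I} f_i(\Bbeta_\lambda) < \delta_{opt}$, which is the desired contradiction.

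The one mildly delicate point is the step-size choice: convexity gives the strict decrease on $I \setminus \{i_0\}$ uniformly in $\lambda > 0$, but for the excluded index $i_0$ we can only invoke continuity around $\Bbeta^*$, so $\lambda$ must be shrunk enough to remain in the region where $f_{i_0} < \delta_{opt}$. Once this is arranged the contradiction closes and the equality of the two minima follows.
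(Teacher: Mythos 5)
Your proof is correct, but it takes a genuinely different route from the paper. The paper's own ``proof'' is essentially a citation: it verifies that the residual functions $(\Ba_i\Bbeta-y_i)^2$ and $|\Ba_i\Bbeta-y_i|$ are convex (hence quasiconvex) and then invokes Corollary~1 of Sim and Hartley without reproducing the argument. You instead give a self-contained two-inequality proof: the trivial direction from monotonicity of the max under index removal, and the reverse direction by contradiction, perturbing the original optimum $\Bbeta^\ast$ toward a hypothetical better point $\Bbeta'$ of the reduced problem. The key mechanism --- convexity of $\max_{i\in I\setminus\{i_0\}}f_i$ giving a strict decrease along the whole segment, combined with continuity of $f_{i_0}$ at $\Bbeta^\ast$ where $f_{i_0}(\Bbeta^\ast)<\delta_{opt}$ to control the excluded index for small step sizes --- is sound, and your care about shrinking $\lambda$ to stay in the region where $f_{i_0}<\delta_{opt}$ is exactly the right delicate point. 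What each approach buys: the cited Sim--Hartley corollary is stated for (strictly) quasiconvex residuals and so covers the multiview-geometry setting the original result was designed for, whereas your argument leans on genuine convexity in the step $g(\Bbeta_\lambda)\le(1-\lambda)\delta_{opt}+\lambda\delta'_{opt}$ (quasiconvexity alone would only give the non-strict bound $\max\{\delta_{opt},\delta'_{opt}\}$); but for the two residual functions actually used in this paper convexity holds, so your proof is complete, more elementary, and arguably more informative than the paper's appeal to an external corollary --- it also avoids the paper's dubious claim that these residuals are \emph{strictly} convex, which is false for $|\Ba_i\Bbeta-y_i|$ and for $(\Ba_i\Bbeta-y_i)^2$ whenever $d>1$.
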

\begin{proof}
It is not difficult to verify that both the residual error function $(\Ba_i \Bbeta - y_i )^2$ and $|\Ba_i \Bbeta - y_i|$ are convex, and therefore also quasiconvex \cite{Boyd}.
Furthermore these two error functions are also strictly convex then also strictly quasiconvex.
 Then due to Corollary 1 in \cite{Sim2006} (omitted here), the theorem holds.
\end{proof}

\begin{algorithm}[t]
\caption{Outlier removal using the $L_\infty$ norm}
\label{alg:1}
\begin{algorithmic}[1]

\STATE \textbf{Input:}
 the measurement data matrix $\BA \in {\mathbb R}^{ n \times d} $; the response vector $\By \in {\mathbb R}^{ n}$;
 outlier percent $p$.
 \STATE \textbf{Initialization:} $l \leftarrow 0$; number of measurements to be removed $L \leftarrow \lfloor n \times p\rfloor$;  index $I_{t} \leftarrow  \{1, \ldots, n\}$.
 \WHILE{ $l < L$}
 \STATE Solve the $L_\infty$ norm minimization problem: $\delta_{opt} = \min_\Bbeta \max_{i \in I_{t}} f_i (\Bbeta)$;  get the support set $I_{supp}$ via equation \eqref{EQ:supp}.
 \STATE Remove the measurements with indices in  $I_{supp}$, i.e., $I_{t} \leftarrow I_t \setminus I_{supp}$.
\STATE \textbf{Remedy}. Solve the minimax problem again with the new index $I_t$ and get the optimal residual $\delta_{opt}$ and parameter ${\Bbeta^\ast}$. Move the indices $I_r$ = $\{ i \in I_{supp} | f_i ({\Bbeta^\ast}) < \delta_{opt}\}$ back to $I_t$. 
 \STATE $l \leftarrow l + |I_{supp}|$.
 \ENDWHILE
 \STATE \textbf{Output:} $\BA_t$ and $\By_t$ with measurement index $i \in I_{t}$.

\end{algorithmic}
\end{algorithm}

At each iteration, we first obtain the optimal parameters
$\Bbeta^\ast$ by solving \eqref{EQ:linf} or \eqref{EQ:abs} and then
remove the measurements (pixels in images) corresponding to largest
residual.  If we continue the iteration, all outliers are eventually 
removed. 

As with all outliers removal processes, there is a risk that
discarding a set of outliers will remove some inliers at the same
time.  In this framework, the outliers are individual pixels, which
are in good supply in visual recognition applications. For example, a
face image will typically contain hundreds or thousands of pixels.
Removing a small fraction of the good pixels is therefore unlikely to
affect recognition performance. However, if too many pixels are
removed, the remaining pool may be too small for successful
recognition. Therefore, we propose a process to restore incorrectly
removed pixels where possible, as part of the overall outlier removal
algorithm list in Algorithm \ref{alg:1}. 
In practice, the heuristic remedy step does improve the performance of
our method on
the visual recognition problems in our experiments. Also note that it
is impossible that all points in the support set are  moved back in
step 6, which is because, based on Theorem \ref{thm:2} we can prove
that
\begin{align*}
\max_{i \in I_{supp}}f_i(\beta^{\ast})
& \geq \min_{\beta}\max_{i \in I_{supp}}f_i(\beta) 
\\
&=
\min_{\beta}\max_{i \in I_t \cup I_{supp}}f_i(\beta) > \min_{\beta}\max_{i \in I_t}f_i(\beta).
\end{align*}

\section{ A fast algorithm for the $L_\infty$ norm minimization problem }
\label{SEC:fast_alg}
Recalling Theorem~\ref{thm:2}, we may remove any data not in the
support set without changing the value of the $L_\infty$ norm. This
property allows us to subdivide the large problem into a set of
smaller sub-problems. We will proceed by first presenting a useful
definition of pseudoconvexity:

\textbf{Definition 1.} \textit{A function $f(\cdot)$ is called pseudoconvex
if $f(\cdot)$ is differentiable and $\nabla f ( \bar{x} )(x - \bar{x}) \geq 0$ implies $f ( x ) \geq f ( \bar{x} )$.}

In this definition, $f(\cdot)$ has to be differentiable. However the notion of pseudoconvexity can be generalized to non-differentiable functions \cite{pseudoconvex2001}:

\textbf{Definition 2.} \textit{A function $f(\cdot)$ is called pseudoconvex if for all $x, y \in \text{dom}\,(f)$:}
\begin{align}
\exists x^\ast \in \partial f(x): \quad \langle x^\ast, y - x\rangle \;\geq 0 \notag\\
\Rightarrow \forall z \in [x, y): f(z) \leq f(y),
\end{align}
\textit{where $\partial f(x)$ is subdifferential of $f(x)$.}

Both of these definitions share the property that any local minimum of a pseudoconvex function is also a global minimum. Based on the first definition, it has been proved that if the residual error functions $f_i(\cdot)$ are pseudoconvex and differentiable,
the cardinality of the support set is not larger than $d + 1$ \cite{Olsson08,Hongdong09}. 
Following the proof in \citep{Hongdong09}, one can easily validate the following corollary:

\textbf{Corollary 1.}
\textit{For the minimax problem with pseudoconvex residual functions (differentiable or not), there must exist a subset $I_s \subset I, |I_s| \leq d + 1$ such that }
\begin{equation}
f_{I_s} (\beta^\ast) =  \min_\Bbeta \max_{i \in I_s} f_i(\Bbeta ) = f_{I} (\beta^\ast) = \min_\Bbeta \max_{i \in I}  f_i (\Bbeta ).
\end{equation}

It is clear that the squared residual functions $f_i (\Bbeta ) = ( \Ba_i \Bbeta - y_i )^2$ in \eqref{EQ:linf} are convex and differentiable,  hence also,  pseudoconvex. 
The absolute residual function \eqref{EQ:abs}: $f_i (\Bbeta ) = | \Ba_i \Bbeta - y_i |$ is sub-differentiable. It is easy to verify that the only non-differentiable point, the origin, satisfies the second definition. Therefore function \eqref{EQ:abs} also satisfies Corollary 1.

The above corollary says that we can solve a sub-problem with at most $d + 1$ measurements without changing the estimated solution to the original minimax problem. However before solving the sub-problems, we should first determine the support set. We choose to solve a set of small sub-problems using an optimization method called \textit{column generation} (CG) \cite{Lubbecke05}. The CG method adds one constraint at a time to the current sub-problem until an optimal solution is obtained.

The process is as follows: We first choose $d + 1$ measurements not contained in the support set. These data are then used to compute a solution and residuals for all the data in the main problem are determined. Then the most violated constraint, or the measurement corresponding to the largest residual, is added to the sub-problem. The sub-problem is then too large, therefore we solve the sub-problem again (now with size $d + 2$) and remove an inactive measurement. Through this strategy, the problem is prevented from growing too large, and violating Corollary 1. When there are no violated constraints, we have obtained the optimal solution.

The proposed fast method is presented in Algorithm \ref{alg:2}.  We
divide the data into an active set, corresponding to a sub-problem,
and the remaining set with the $L_2$ norm minimization. This algorithm
allows us to solve the original problem with the measurement matrix of
size $n \times d$, by solving a series of small problems with size $(d
+ 1) \times d$ or $(d + 2) \times d$. In most visual recognition
problems, $n \gg d$.  Typically the algorithm  converges in less
than 30 iterations in all of our experiments. 
We will show that this strategy radically improves
computational efficiency.

For maximal efficiency, we choose to solve the LP problem, \eqref{EQ:LP}, and utilize the code generator \texttt{CVXGEN} \cite{CVXGEN} to generate custom, high speed solvers for the sub-problems in algorithm \ref{alg:2}.
\texttt{CVXGEN} is a software tool that automatically generates
customized C code for LP or QP problems of modest size.
\texttt{CVXGEN} is less effective on large problems (e.g., $>2000$
variables). However, in Algorithm \ref{alg:2} we convert the original problem into a set of small sub-problems, which can be efficiently solved with the generator. 
\texttt{CVXGEN} embeds the problem size into the generated code, restricting it to fixed-size problems \cite{CVXGEN}. The proposed method is only ever solves problems of size $d +1$ or $d + 2$, enabling the use of \texttt{CVXGEN}.

\begin{algorithm}[t]
\caption{A fast algorithm for the $L_\infty$ problem}
\label{alg:2}
\begin{algorithmic}[1]

\STATE \textbf{Input:}
The measurement data matrix $\BA \in {\mathbb R}^{ n \times d} $; the response vector $\By \in {\mathbb R}^{ n}$; maximum iteration number $l_{max}$.
 
\STATE \textbf{Initialization:}
 Initialize the active set $I_s$ with indices corresponding to the largest $d + 1$ absolute residuals from vector \big\{ $f_i(\Bbeta_{ls})$ \big\},  $i \in I$ using the LS solution  as in \eqref{EQ:2}; set $I_r \leftarrow I \setminus I_s$; set iteration counter $l \leftarrow 0$.

\STATE Solve the $L_\infty$-minimization sub-problem 
\begin{equation}
\label{EQ:sub}
\Bbeta^{\ast} = \arg\min_\Bbeta \max_{i \in I_s} f_i (\Bbeta )
\end{equation} 
and set $t^{\ast} \leftarrow \max_{i \in I_s} f_i (\Bbeta^{\ast} )$.
\WHILE{$l < l_{max}$}

\STATE Get the most violated measurement from the remaining set $I_r$:  $i_m = \arg\max_{i \in I_r} f_i (\Bbeta^{\ast} )$  
\STATE Check for optimal solution: \\
\textbf{if} {$f_{i_m} (\Bbeta ) \leq t^{\ast}$},\;  \textbf{then} break (problem solved).

\STATE Update the active and remaining set:\\
 $I_s \leftarrow I_s \cup \{i_m\}$, \; $I_r \leftarrow I_r \setminus \{i_m\}$.

\STATE Solve the $L_\infty$-minimization sub-problem in \eqref{EQ:sub}.
\STATE Move the inactive measurement index $i_d = \arg\min f_i (\Bbeta^{\ast} )$
to the remaining set:  $I_s \leftarrow I_s \setminus \{i_d\}$, \; $I_r \leftarrow I_r \cup \{i_d\}$

\STATE $l \leftarrow l + 1$.
\ENDWHILE
\STATE \textbf{Output:}
$\Bbeta^{\ast}$.

\end{algorithmic}
\end{algorithm}

\section{Experimental Results}
\label{SEC:Exp}
In this section, we  first illustrate the effectiveness and efficiency
of our algorithm on several classic geometric model fitting problems.
Then the proposed method is evaluated on face recognition problems
with both artificial and natural contiguous occlusions. Finally, we
test our method on the iris recognition problem, where both
segmentation error and occlusions are present. For comparison, we also
evaluate several other representative robust regression methods on
face recognition problems.

Once the outliers have been removed from the data set, any solver can be used to obtain the final model estimate. 
We implemented the original minimax algorithm using Matlab package
\texttt{CVX} \cite{cvx} with SeDuMi solver \cite{sedumi99} while the
proposed fast algorithm was implemented using solvers generated by
\texttt{CVXGEN} \cite{CVXGEN}. All experiments are conducted in Matlab
running on a PC with a Quad-Core 3.07GHz CPU and 12GB of RAM, using
\texttt{mex} to call the solvers from \texttt{CVXGEN}. Note that the
algorithm makes no special effort to use multiple cores, though Matlab
itself may do so if possible.

\subsection{Geometric model fitting}
\subsubsection{Line fitting}
\label{subsub:line}

\begin{figure*}[t!]
\centering
{\includegraphics[width = 0.28\textwidth]{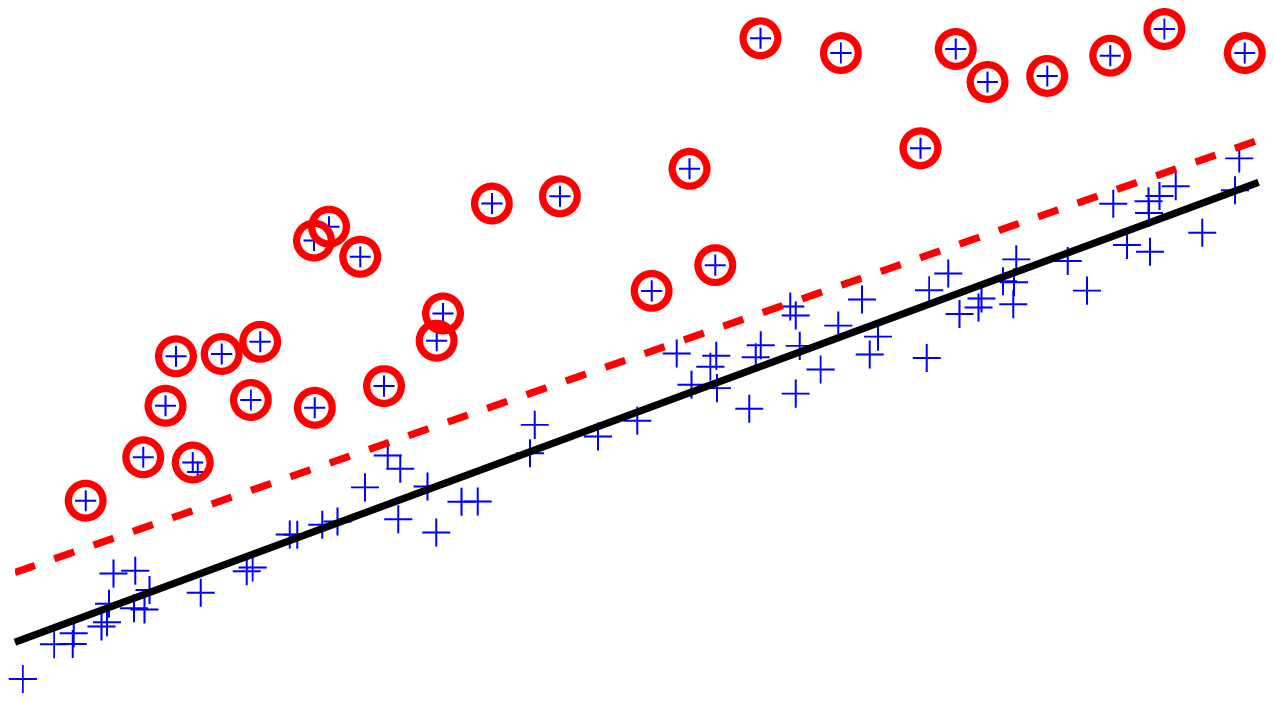} } \quad\quad\quad\quad
{\includegraphics[width = 0.28\textwidth]{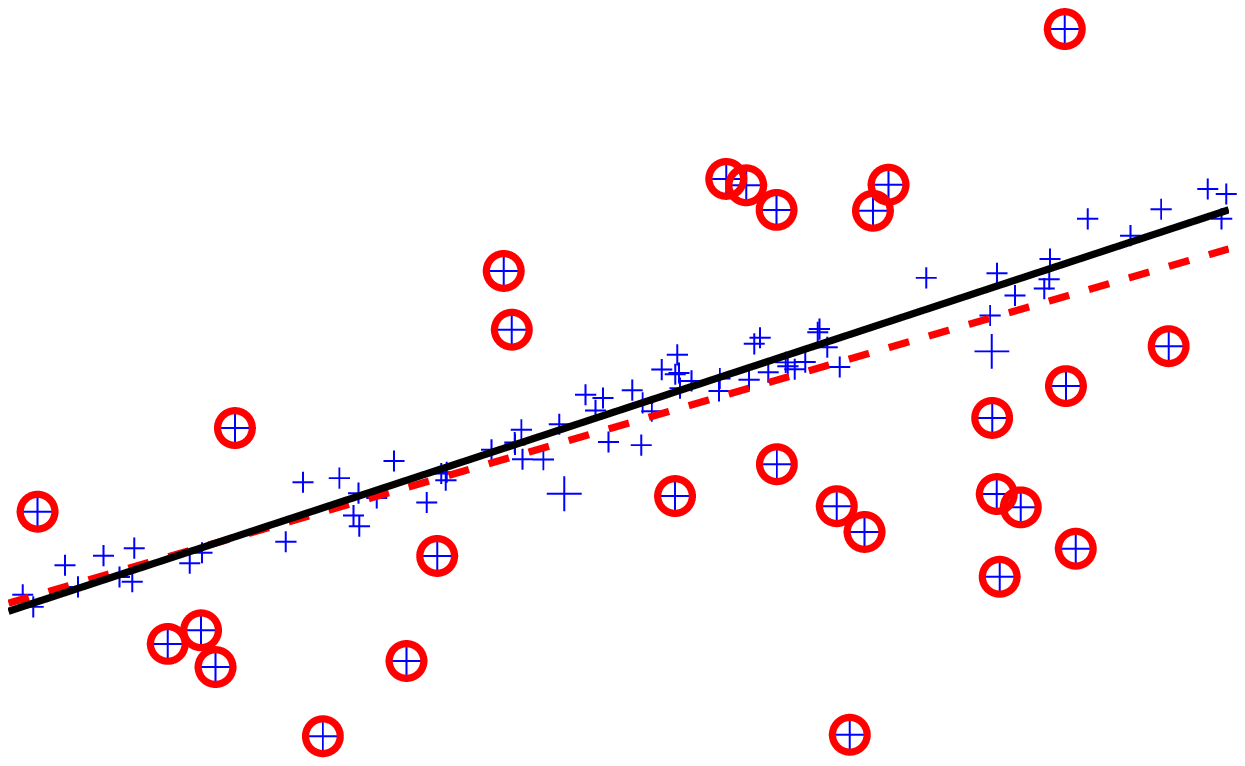}      }
\caption{Two examples demonstrating the performance of our algorithm
on data contaminated on one-side (left) and two-sides (right).
Outliers detected by our algorithm are marked with circles. The
solid black line is the result after outlier removal and the red
dashed line shows the result with outliers included.  In these two
cases, we set $n = 100$ and $k = 70$, where $n$ and $k$ are described
in Section~\ref{subsub:line}.
}
\label{Fig:SidedContamination}
\end{figure*}
Figure~\ref{Fig:SidedContamination} shows estimation performance when our algorithm is used for outlier removal and the line is subsequently estimated via least squares, on data generated under two different error models.

%

We generate $ \BA \in \mathbb{R}
^ {n \times d} $ randomly and  $\Bbeta \in \mathbb{R} ^ d$ randomly. We then set the first $k$ error
terms $ \epsilon _ j, j \in [1, \ldots, k]$ as independent standard normal random variables. We set
the last $n - k$ error terms $ \epsilon _ j, j \in [k + 1, \ldots, n]$ as independent chi squared random variables with $5$ degrees of
freedom. We also test using the two-sided contamination model which sets the sign of the last $n -
k$ variables randomly such that the outliers lie on both side of the true regression line. In both
cases we set $\By = \BA\Bbeta + \Bepsilon$.

As can be seen in Figure~\ref{Fig:SidedContamination}, our method detects all of the outliers and
consequently generates a line estimate which fits the inlier set well for both noise models, whilst
the estimate obtained with the outliers included achieves a reasonable estimate only for the two-sided contamination case, where the outliers are evenly distributed on both sides of the line. 

\subsubsection{Ellipse fitting}
\label{subsub:ellipse}
An example of the performance of our method applied to ellipse fitting is shown in Figure~\ref{fig:circle}.
$100$ points were sampled uniformly around the perimeter of an ellipse centred at $(0,0)$, and where then perturbed via offset drawn from $\mathcal{N}(0,1)$.
%
$30$ outliers were randomly drawn from an approximately uniform distribution within the bounding box shown in Figure~\ref{fig:circle}.
The result of the method, again shown in Figure~\ref{fig:circle}, shows that our method has correctly identified the inlier and outlier sets, and demonstrates that the centre and radius estimated by our method are accurate.

\begin{table*}[th]
\centering
\begin{tabular}{ccccccccc}
\hline
\multirow{2}{*}{\textbf{Method}} & \multicolumn{8}{c}{\textbf{Number of observations}} \\
 &20 & 50 & 100 & 200 & 500 & 1000 & 2000& 10000\\
\hline
original & 0.185 & 0.454 & 0.89 & 1.926& 5.093& 11.597 & 29.968& 313.328 \\
fast & 0.002 &0.005 & 0.011&0.031 & 0.083 &0.164&0.395 &4.127\\
\hline
\end{tabular}
\caption{Computation time (in seconds) comparison of the original and fast algorithms implemented with different solvers. For line fitting problem, we fixed $d = 2$ and the observation number is increased from 20 to 10000.}
\label{Tab:time_line}
\end{table*}

\begin{figure}[b!]
\centering
\includegraphics[width = 0.4\textwidth]{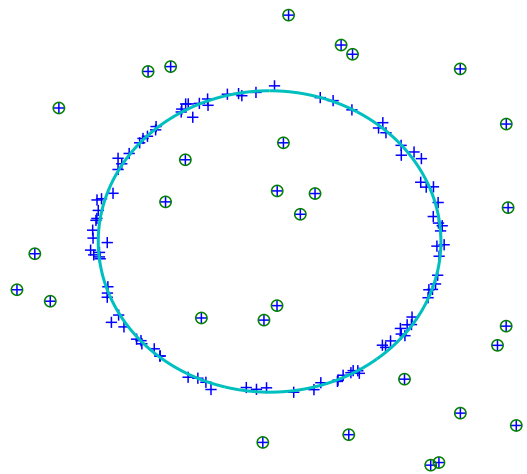} 
\caption{An example of the performance of our method in ellipse fitting. Points identified as
outliers are marked by circles.}
\label{fig:circle}
\end{figure}

\subsubsection{Efficiency}
Next we compare the computational efficiency of the standard
$L_\infty$ norm outlier removal process and of the proposed fast
algorithm. 

For the line fitting problem we generate the data using the scheme described previously. We initially fix the data dimension $d = 2$ and increase the problem size $n$ from 20 to 10000. The outlier fraction, $k$, is set 90\% of $n$.
A comparison of the running time for these two algorithms are shown in Table~\ref{Tab:time_line}. The fast algorithm finishes 70 to 80 times faster than the original algorithm. Specifically, with dimension 10000 the fast algorithm finishes in approximately 4 seconds, whilst the original algorithm requires more than 5 minutes.
In this case, the proposed fast approach is about {\em 80 times
faster} than the conventional approach. 

Second, we fix the number of observations $n = 200$ and vary the data dimension $d$ from 2 to 10. Execution times are shown in Table~\ref{Tab:time_line_d}. Consistent with the last experiment, the fast algorithm completes far more rapidly than the original algorithm in all situations. 
When $d = 2$, the proposed fast algorithm is faster than the original algorithm by more than 60 times. With a larger dimension $d = 10$, the proposed fast algorithm takes only 0.128 seconds to complete while the original algorithm requires more than 2 seconds. 

\begin{table}
\centering
\begin{tabular}{cccccc}
\hline
\multirow{2}{*}{\textbf{Method}} & \multicolumn{5}{c}{\textbf{Feature dimension}} \\
& 2 & 4 & 6 & 8 & 10 \\
\hline
original & 1.926&1.984 &1.994 & 2.016&2.039 \\
fast &　0.031 &0.045 & 0.068&0.096 & 0.128\\
\hline
\end{tabular}
\caption{Computation time (in seconds) comparison of the original and fast algorithms implemented with different solvers. Data is generated as 200 observations with dimension varying from 2 to 10.}
\label{Tab:time_line_d}
\end{table}


\subsection{Robust face recognition}
\label{Sec:face_rec}
In this section, we test our method on face recognition problems from 3 datasets: AR \cite{AMM98}, Extended Yale B \cite{GeBeKr01}, and CMU-PIE \cite{Sim03thecmu}. A range of state-of-the-art algorithms are compared to the proposed method. Recently, sparse representation based classification (SRC) \cite{Wright09} obtained an excellent performance for robust face recognition problems, especially with contiguous occlusions. The SRC problem solves $\min  \Vert \Bbeta \Vert_1, \st \|\By - \BA\Bbeta\|_2 \leq \epsilon$, where $\BA$ is the training data from all classes, $\Bbeta$ is the corresponding coefficient vector and $\epsilon > 0$ is the error tolerance. To handle occlusions, SRC is extended to 
$
\min   \Vert \mathbf{\omega} \Vert_1, \st \|\By - \mathbf{B}\mathbf{\omega}\|_2 \leq \epsilon, 
$
where $\mathbf{B} = [\BA, \mathbf{I}]$, and $\mathbf{\omega} = [\Bbeta, \mathbf{e}]$. $\mathbf{I}$ and $\mathbf{e}$ are the identity matrix and error vector respectively. SRC assigns the test image to the class with smallest residual: $identity(\By) = \min_i \|\By - A\delta_i(\Bbeta)\|_2$. Here $\delta_i(\Bbeta)$ is a vector whose only nonzero entries are the entries in $\Bbeta$ corresponding to the $i$-th class.
We also evaluate the following two methods which are related to our method.
Most recently, a method called Collaborative Representation-based Classification (CRC) was proposed in \cite{CRCzhanglei2011} which relax the $L_1$ norm  to $L_2$ norm.  \comment{Is this right? Why do we care if it was only competitive? Is it better for some other reason? --- All these methods are state-of-the-art. I said they are competitive because sometimes one is better on a dataset, 
and another one may be better on another dataset. In the LRC and CRC paper, comparable or even better results are reported,
such as on the AR dataset.
All these three methods are recently proposed and related to our method. Our method first removes outliers and then apply LRC.}
Linear regression classification (LRC) \cite{LRC10} cast face
recognition as a simple linear regression problem: $\min  \Vert
\Bbeta_i \Vert_1, \st \By = \BA_i\Bbeta_i$, where $\BA_i$ and
$\Bbeta_i$ are the training data and representative coefficients with
respect to class $i$. LRC selects the class with smallest residual:
$identity(\By) = \min_i \|\By - A_i\Bbeta_i\|_2$.  Both CRC and LRC
achieved competitive or even better results than SRC
\cite{CRCzhanglei2011,LRC10} in some cases.

For the purposes of the face recognition experiments, outlier pixels are first removed using our method,
leaving the remaining inlier set to be processed by any regression based classifier.  In the
experiments listed below, LRC has been used for this purpose due to its computational efficiency.

Lots of robust regression estimators has been developed in the
statistic literature. In this section, we also compare other two
popular estimators, namely, Least median of squares (LMS) \cite{LMS1984}
and MM-estimator \cite{yohai1987high},
both of which have high-breakdown points and do not need to specify the number of outliers to be removed.
Comparison is conducted  on face recognition  problems with both artificial and natural occlusions. These methods are first used to estimate the coefficient $\beta$ and face images are recognized by the minimal residuals.


\begin{figure}
\centering
\includegraphics[width = 0.06\textwidth]{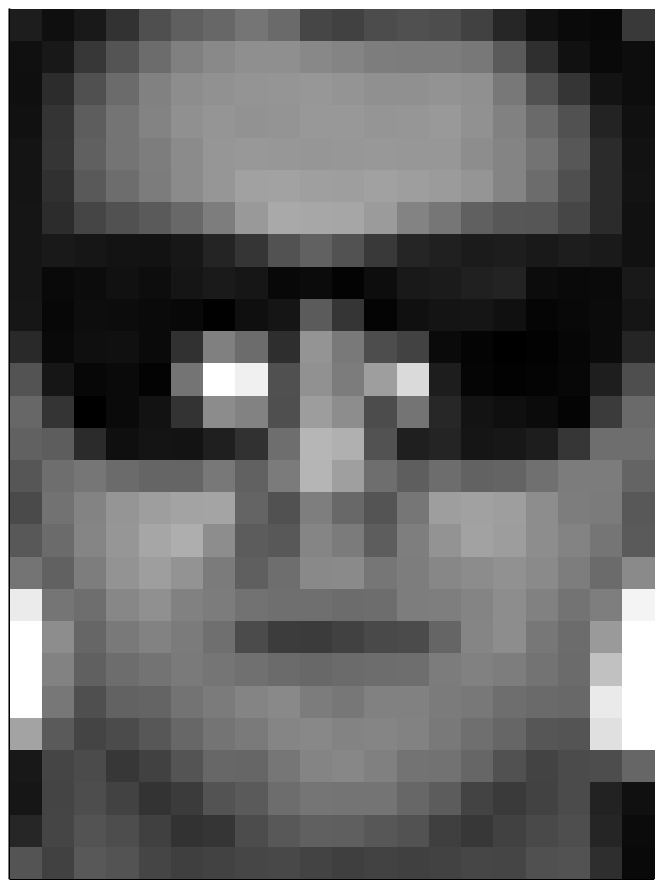}
\includegraphics[width = 0.06\textwidth]{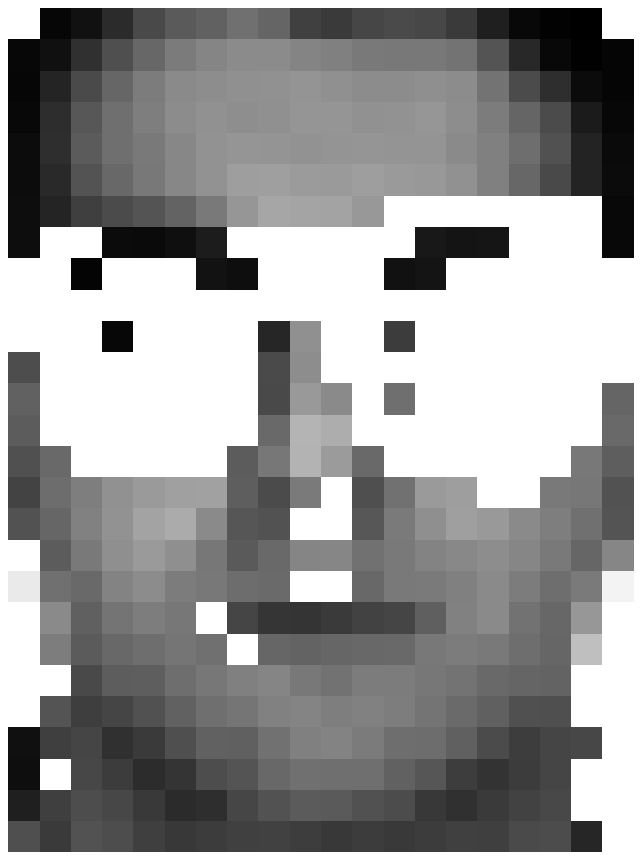}
\includegraphics[width = 0.06\textwidth]{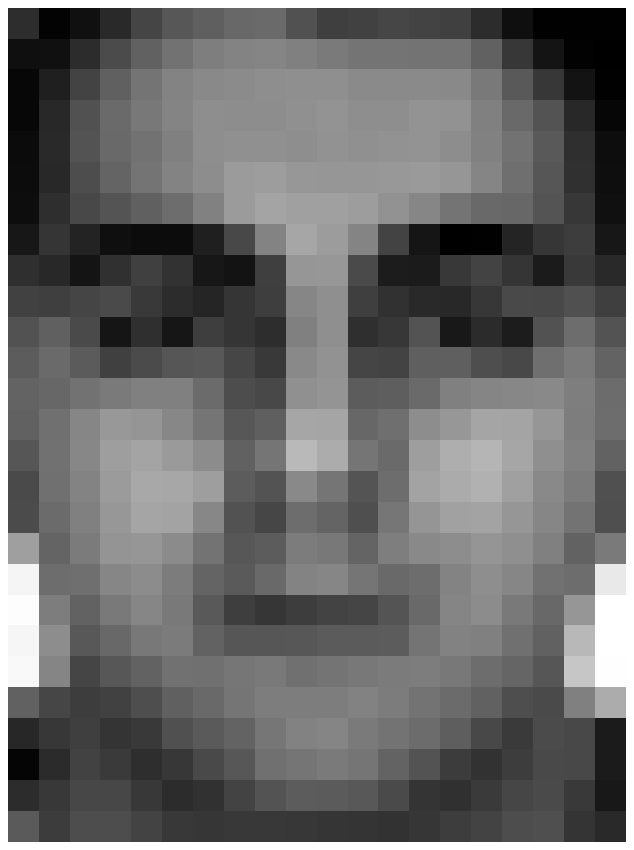}

\includegraphics[width = 0.06\textwidth]{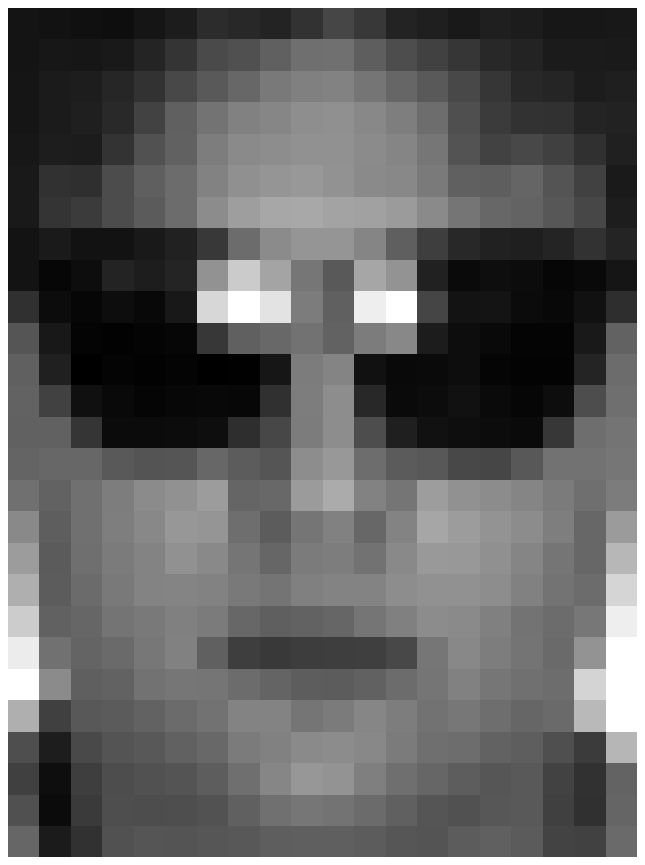}
\includegraphics[width = 0.06\textwidth]{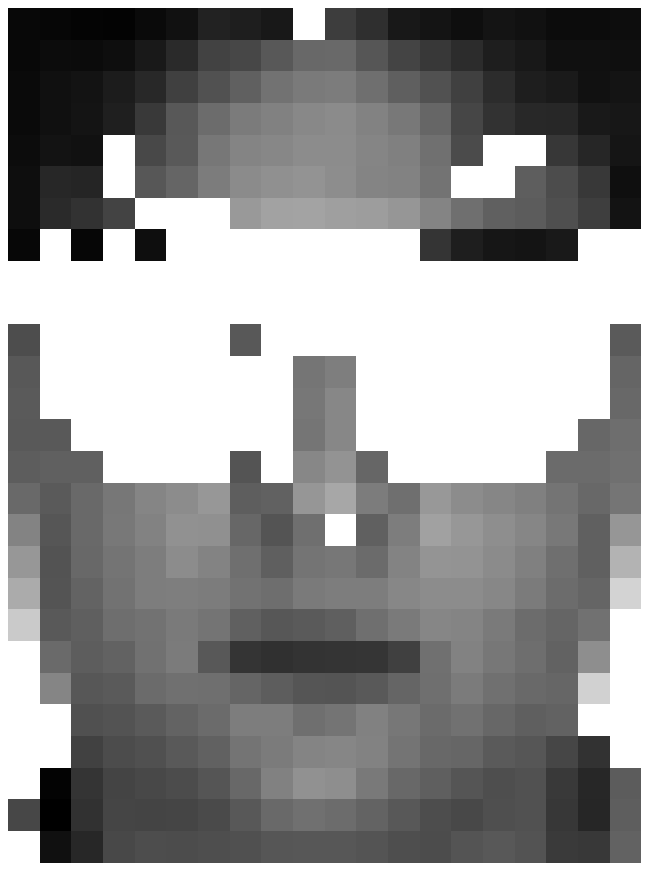}
\includegraphics[width = 0.06\textwidth]{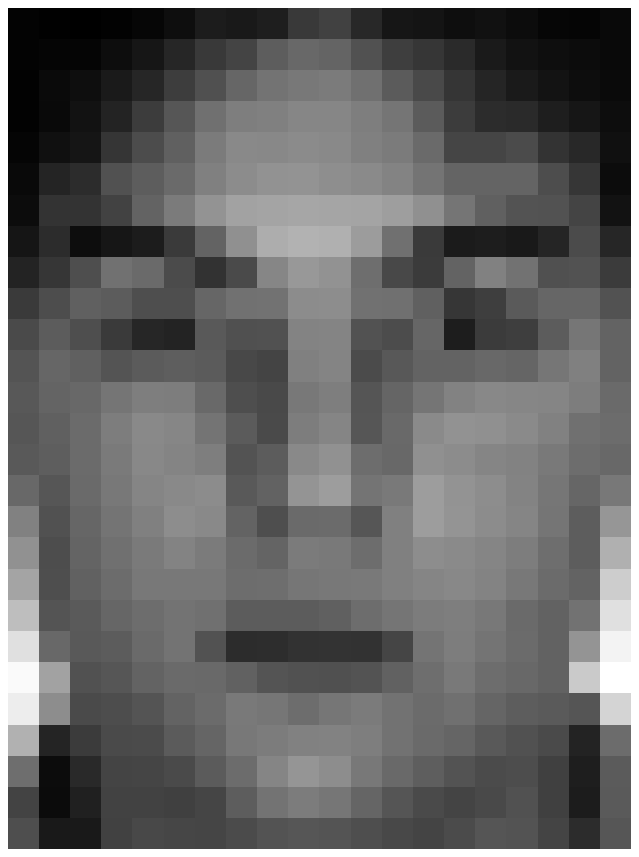}
\caption{Detecting the sunglasses occlusion of two example images with dimension $27 \times 20$ from the AR dataset. Each row shows an original image of a particular subject, followed by an image where outlying pixels have been automatically marked in white and finally a reconstructed image of the subject.}
\label{Fig:sunglass}
\end{figure}

\begin{table}[t]
\centering
\begin{tabular}{ccccc}
\hline
\multirow{2}{*}{\textbf{Method}} & \multicolumn{4}{c}{\textbf{Feature
dimension}} \\
   & 54 & 130 & 300 & 540 \\
\hline
LRC & 21.0\% & 38.5\% & 54.5\% & 60.0\% \\
SRC & \textbf{48.0}\% & 67.5\% & 69.5\% & 64.5\%\\
CRC & 22.0\% & 35.5\% & 44.5\% & 56.0\%\\
\hline
MM-estimator & 0.5\% & 8.5\% & 21\% & 24\% \\
LMS &  9\% & 25\% & 37.5\% & 48\%\\
our method & 43.0\% & \textbf{85.0\%} & \textbf{99.5\%} & \textbf{100\%} \\
\hline
\end{tabular}
\caption{Accuracy rates (\%) of different methods on the AR dataset
with sunglasses occlusion. The various feature dimension correspond to
downsampling the original $165 \times 120$ pixel images to $9 \times
6$, $13 \times 10$, $20 \times 15$ and $27 \times 20$, respectively.}
\label{Tab:sunglasses}
\end{table}

\subsubsection{Faces recognition despite disguise}
\label{subsub:disguise}
The AR dataset \cite{AMM98} consists of over $4000$ facial images from $126$ subjects ($70$ men and
$56$ women). For each subject $26$ facial images were taken in two separate sessions, $13$ per session. The images
exhibit a number of variations including various facial expressions (neutral, smile, anger, and scream), illuminations (left light on, right light on and all side lights on), and occlusion by sunglasses and scarves.
Of the $126$ subjects available, $100$ have been randomly selected for testing (50 males and 50 females) and the images cropped to $165 \times 120$ pixels. $8$ images of each subject with various facial
expressions, but no occlusions, were selected for training.
Testing was carried out on $2$ images of each of the selected subjects wearing sunglasses. Figure~\ref{Fig:sunglass} shows two typical images from the AR dataset with the outliers (30\% of all the pixels in the face images) detected by our method set to white. The reconstructed images are shown as the third and sixth images. 

The images were downsampled to produce features of $30$, $54$, $130$,
and $540$ dimensions respectively. Table~\ref{Tab:sunglasses} shows a
comparison of the recognition rates of various methods.  Our method
exhibits superior performance to LRC, CRC and SRC in all except the
lowest feature dimension case.   Specifically with feature dimension
540, the proposed method achieves a perfect accuracy 100\%, which
outperforms LRC, CRC and SRC by 40\%, 44\% and 35.5\% respectively.
MM-estimator and LMS failed to achieve good results on this dataset,
which is mainly because the residuals of outliers severely affect the
final classification although relatively accurate coefficients could be estimated.
In this face recognition application, the final classification is
based on the fitting residual. 
These results highlight the ability of our method for outlier removal, which can significantly improve the face recognition performance.

%

\begin{figure}
\centering
\includegraphics[height=1.2cm]{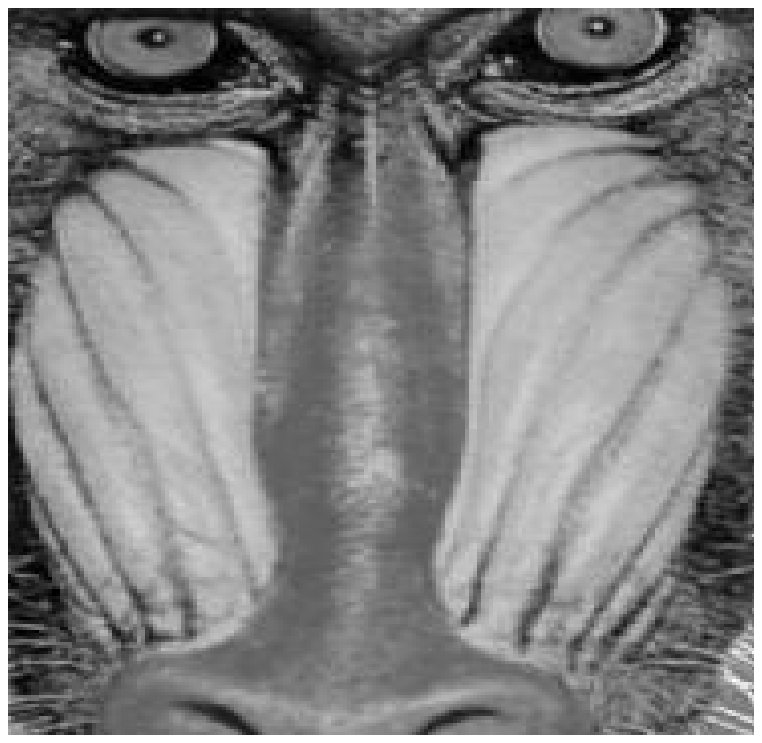} \quad\quad
\includegraphics[height=1.2cm]{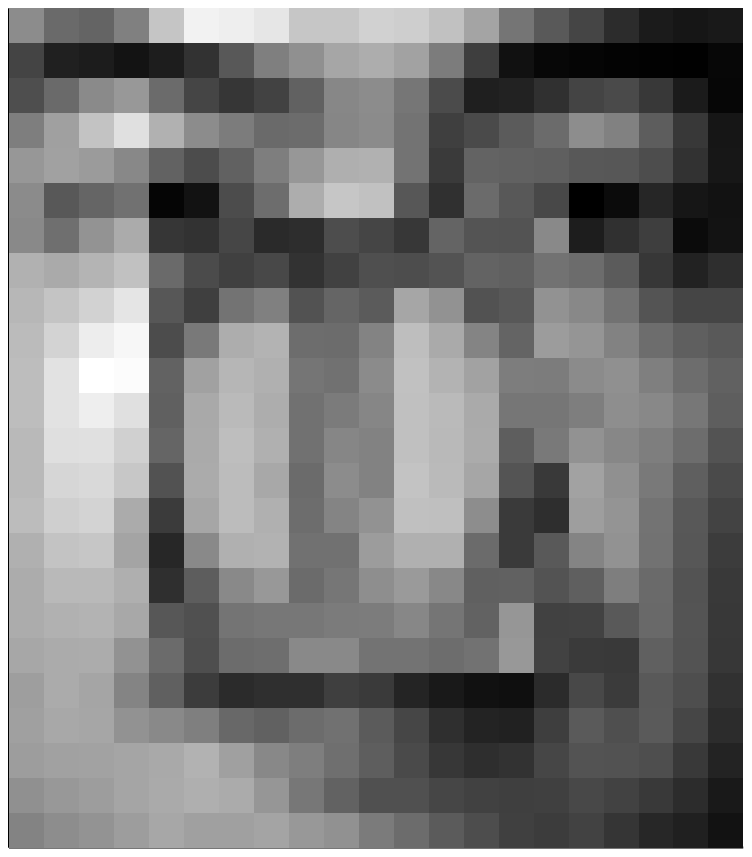}
\includegraphics[height=1.2cm]{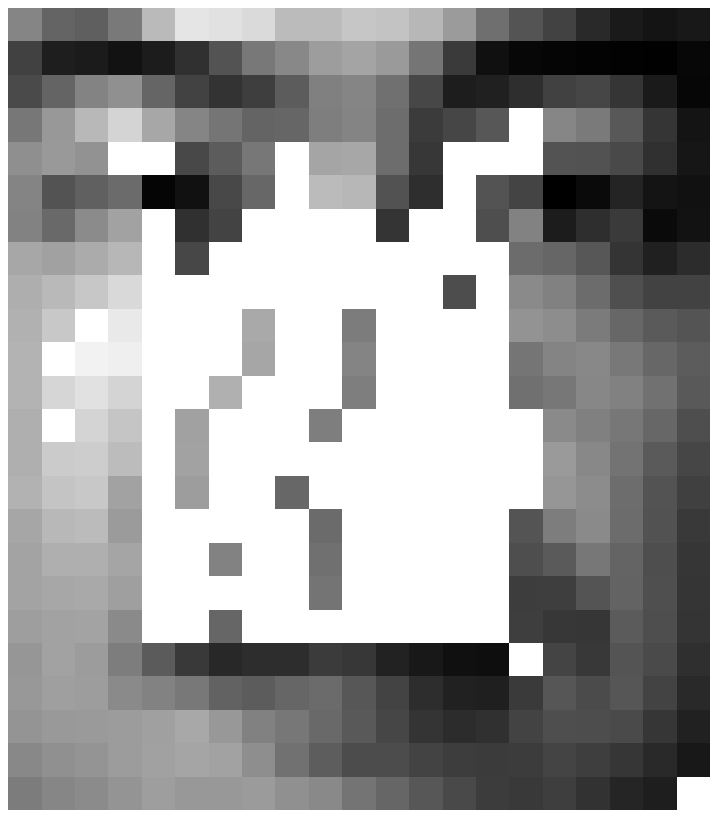}
\includegraphics[height=1.2cm]{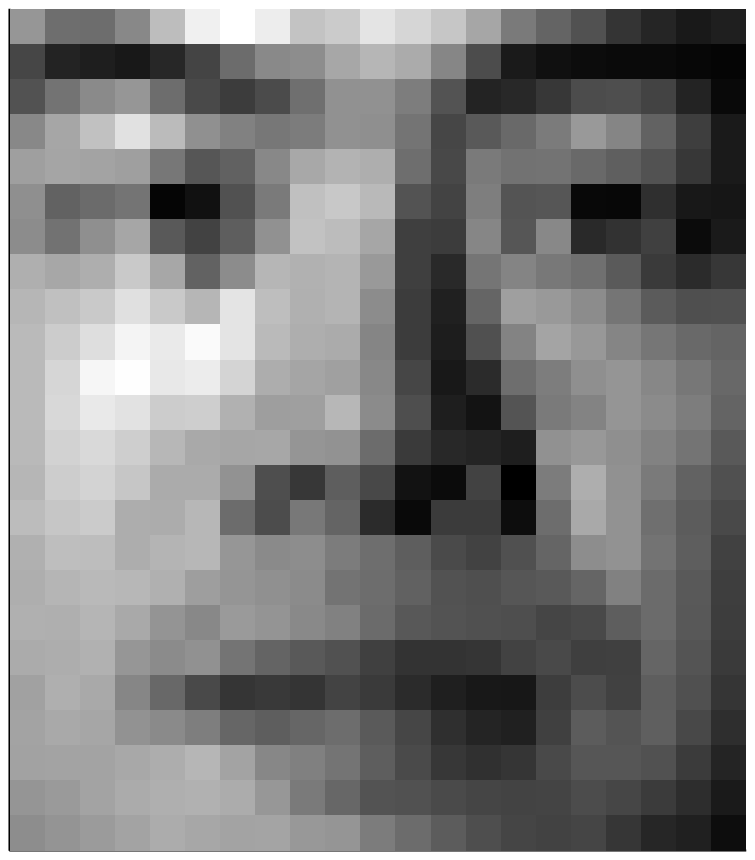}
\caption{Detecting the random placed square monkey face (the first image) in an example face image (the second image) from the Extended Yale B dataset. Outliers (30\% of the whole image) detected by our method are marked as white pixels (the third image) and  the reconstructed image is shown on the rightmost.}
\label{Fig:YaleB}
\end{figure}


\begin{table*}
\centering
\begin{tabular}{ccccccc}
\hline
\multirow{2}{*}{\textbf{Method}} & \multicolumn{6}{c}{\textbf{Occlusion rate} } \\
& 10\% & 20\% & 30\% & 35\% & 40\% & 50\%\\
\hline
LRC &  $98.5 \pm 0.18$ & $ 88.6 \pm 0.18$ & $ 68.8 \pm 3.3 $&  $60.4  \pm  3.2 $&$ 51.8  \pm 1.3$ & $41.7 \pm 1.7$\\

SRC & $ 99.0 \pm 0.3$  & $  94.8 \pm 1.2 $&$   77.2 \pm 1.5 $&  $ 67.3  \pm 1.1 $&  $56.2 \pm  2.2 $&$   45.9 \pm   1.5$\\

CRC & $ 98.6 \pm 0.7$ & $  90.3 \pm 1.7 $&  $ 74.9 \pm 2.6$ &   $67.2 \pm 1.0 $&$   58.3 \pm  1.8$& $ 48.3 \pm 2.0$ \\

our method & $\textbf{99.8}\pm \textbf{0.1} $& $\textbf{99.3}  \pm  \textbf{0.1} $&$  \textbf{98.2}  \pm \textbf{1.0}$ 
&$  \textbf{96}  \pm \textbf{1.2}  $& $ \textbf{91.4} \pm \textbf{0.7} $& $ \textbf{85.5}\pm \textbf{0.7}$ \\
\hline
\end{tabular}
\caption{{Mean and standard deviations of recognition accuracies (\%) in the presence of randomly placed block occlusions of images from the Extended Yale B dataset based on 5 runs results.}}
\label{Tab:YaleB_acc}
\end{table*}

\subsubsection{Contiguous block occlusions}
\label{subsub:contiguous}
In order to evaluate the performance of the algorithm in the presence of artificial noise and larger
occlusions, we now describe testing where large regions of the original image are replaced by pixels from another
source.
The Extended Yale B dataset \cite{GeBeKr01} was used as the source of the original images and consists of $2414$ frontal face images from $38$ subjects under various lighting conditions. The images are cropped and normalized to $192 \times 168$ pixels \cite{KCLee05}.
%
%
Following \cite{Wright09}, we choose subsets 1 and 2 (715 images ) for training and Subset 3 (451 images) for testing. In our experiment, all the images are downsampled to $24 \times 21$ pixels. We replace a randomly selected region, covering between 10\% and 50\% of each image, with a square monkey face. Figure~\ref{Fig:YaleB} shows the monkey face, an example of an occluded image, the outlying pixels detected by our method and a reconstructed copy of the input image.

Table~\ref{Tab:YaleB_acc} compares the average recognition rates of the different methods, averaged over five separate runs. Our proposed method outperforms all other methods in all conditions. With small occlusions, all methods achieve high accuracy, however, the performance of LRC, SRC and CRC deteriorate dramatically as the size of the occlusion increases. In contrast, our method is robust in the presence of outliers. In particular, with 30\% occlusion our method obtains 98.2\% accuracy while recognition rates of all the other methods are below 80\%. With 50\% occlusion, all other methods show low performances, while the accuracy for our method is still above 85\%. \comment{What does this mean? The curve is flatter? If its referring to the variance, the graph needs to be fixed because you can barely see them. --- I mean the Recognition Accuracy (not variance) is still very high even if with heavy occlusions (50\% by area). The figure is replaced with a table.} According to Table~\ref{Tab:YaleB_acc}, we can also see that the proposed method is more stable in the sense of accuracy variations, which is mainly because  the outliers are effectively detected.



\subsubsection{Partial face features on the CMU PIE dataset}
\label{SEC:PIE}
\begin{figure}
\centering
\includegraphics[height=1.2cm]{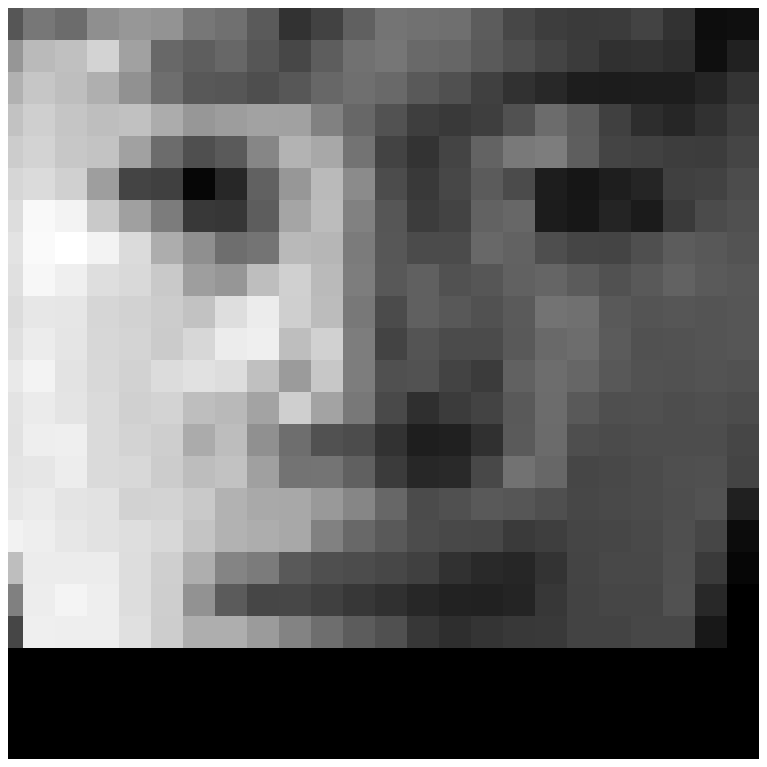}
\includegraphics[height=1.2cm]{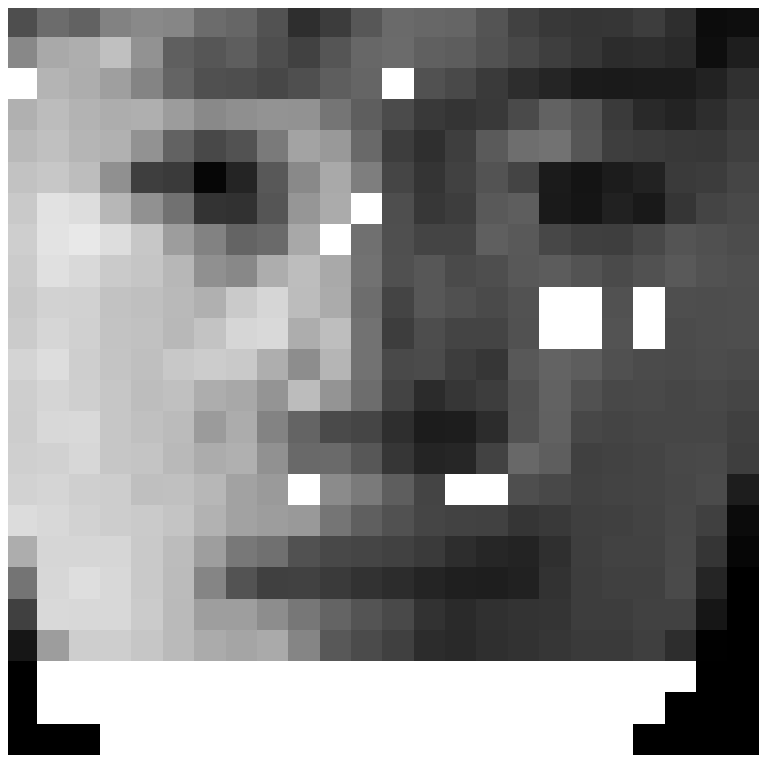}
\includegraphics[height=1.2cm]{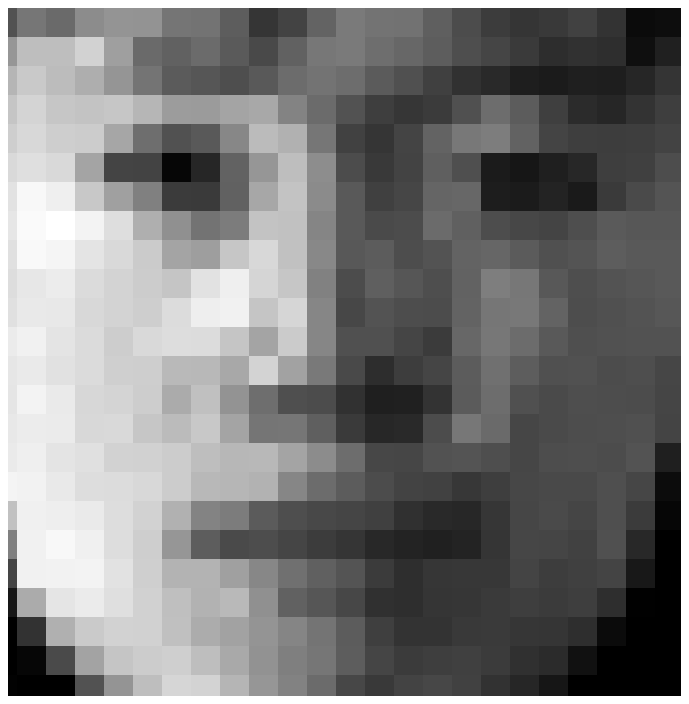}

\includegraphics[height=1.2cm]{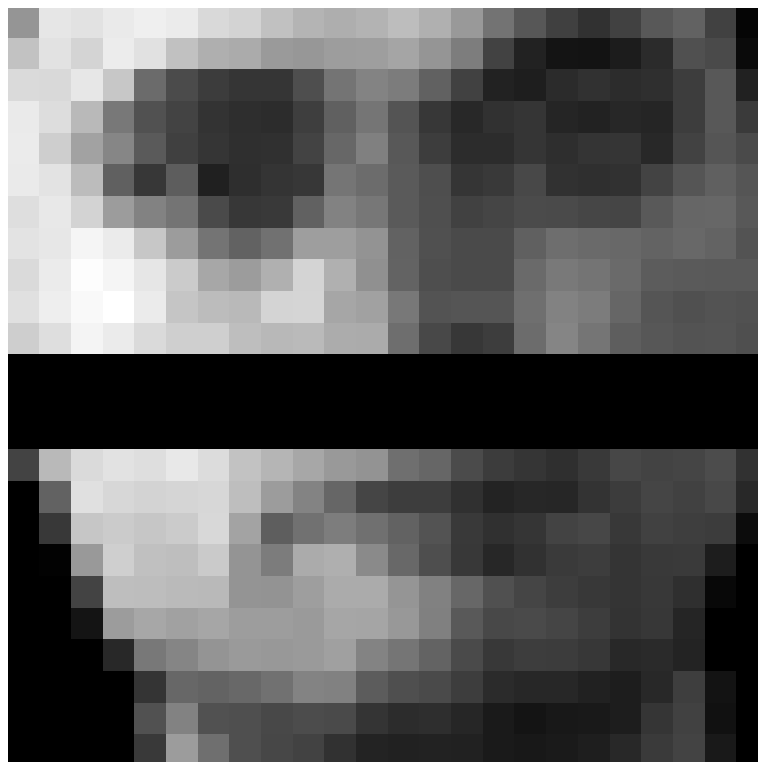}
\includegraphics[height=1.2cm]{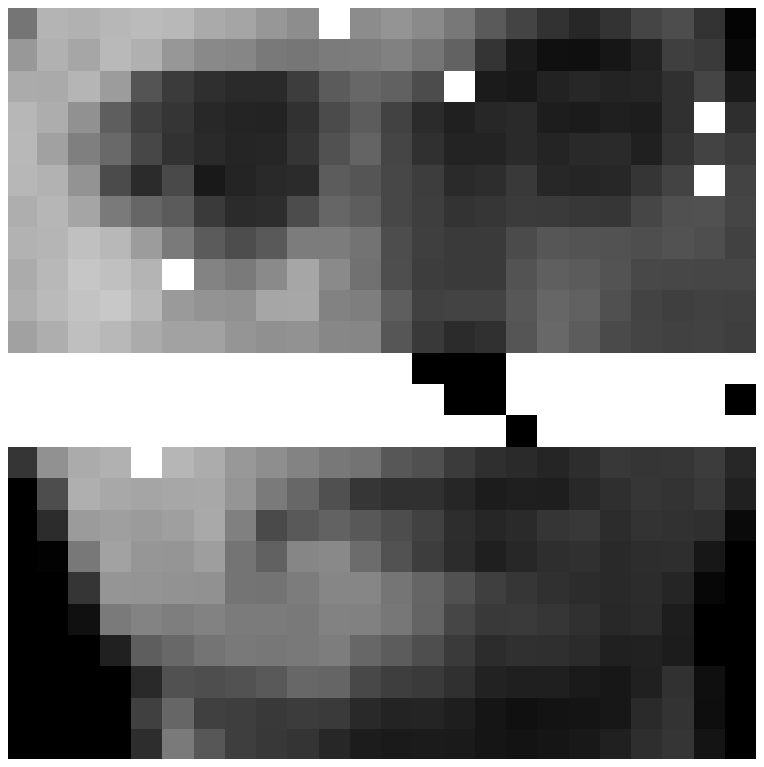}
\includegraphics[height=1.2cm]{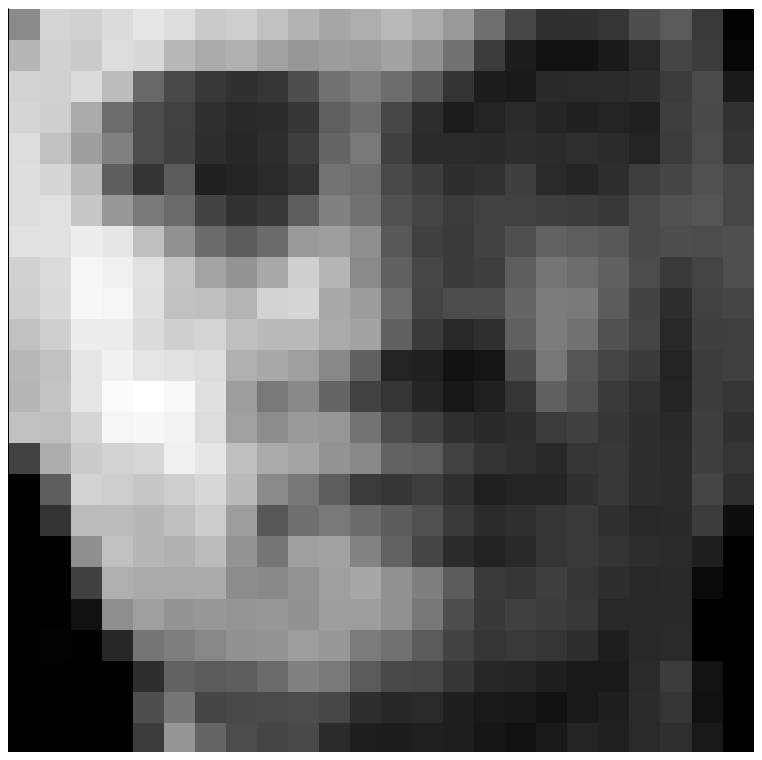}
\caption{This figure demonstrates the detection of the dead pixels, covering about 13\% of the input image. Each row shows a distinct example drawn from the CMU-PIE dataset. Outliers detected by our method are marked as white pixels in the centre column, followed by the reconstructed images in the last column.}
\label{Fig:PIE}
\end{figure}

As shown in the previous example, occlusion can significantly reduce face recognition performance, particularly in methods without outlier removal.  
Wright et al.\ \cite{Wright09} attempt to identify faces on the basis of
particular sub-images, such as the area around the eye or ear, etc.
Here, we  use the complete face and remove increasing portions of
the bottom half of the image, so that initially the neck is
obscured, followed by the chin and mouth, etc. The removal occurs
by setting the pixels to be black. Thus, the complete image is used
as a feature vector, and a subset of the elements is set to zero.
A second experiment is performed following the same procedure, but to the
central section of the face, thus initially obscuring the nose, then
the ears, eyes and mouth, etc.

In this experiment, we use the CMU-PIE dataset \cite{Sim03thecmu} which contains 68 subjects and a total of 41368 face images. Each person has their picture taken with 13 different poses, 43 different illumination conditions, and with 4 different expressions. In our experiment all of the face images are aligned and cropped, with 256 gray level per pixel \cite{He05laplacianscore}, and finally resized to $24 \times 24$ pixels. Here we use the subset containing images of pose C27 (a nearly front pose) and we  use the data from the first 20 subjects, each subject with 21 images. 
The first 15 images of each subject are used for training and the last 6 images for testing. The test images are preprocessed so that one part (bottom or middle) of faces (from $10\%$ to $40\%$ of pixels) are set to black. See Figure~\ref{Fig:PIE} for examples. 

The recognition rates of different methods when the bottom area of the image is occluded are reported in Table~\ref{Tab:PIE_bottom}. When occlusion area is small, all methods except MM-estimator obtain perfect 100\% recognition rates. When occlusion area increases to 20\% of the image size, accuracy for LRC  drops to 80\%, which is because the black pixels bias the linear regression estimate. The technique used in SRC mentioned above performs better than LRC when occlusion are present, achieving 98.3\% accuracy. However our method is able to achieve 100\% accuracy with that level of occlusion. We can see that CRC achieves a relatively good result (83.3\%) for 30\% occlusion, accuracies of other methods (including robust methods MM-estimator and LMS)  drop dramatically. In contrast, our method still achieves 100\% accuracy which demonstrates the robustness of our method against heavy occlusion. 
 The comparison of these methods for occlusion in the middle part of faces is shown in Figure~\ref{Tab:PIE_middle}. These results again show the robustness of our method against heavy occlusions. Almost all the methods show lower accuracy than in the former situation. Such a results leads to the conclusion that information from the middle part of a face (area around nose) is more discriminative than that form the bottom part (area around chin) for face recognition.


\begin{table}[h!]
\centering
\begin{tabular}{cccc}
\hline
\multirow{2}{*}{\textbf{Method}} & \multicolumn{3}{c}{\textbf{Percentage of image removed}} \\
 & 10\% & 20\% & 30\%\\
\hline
LRC & \textbf{100\%} & 80.0\% & 61.7\% \\
SRC & \textbf{100\%} & 98.3\% & 71.7\% \\
CRC & \textbf{100\%} & 96.7\% & 83.3\% \\
\hline
MM-estimator & 99.2\% & 41.7\% & 15.0\% \\
LMS & \textbf{100\%} & 77.5\% & 58.3\% \\
our method & \textbf{100\%} & \textbf{100\%} & \textbf{100\%} \\
\hline
\end{tabular}
\caption{Recognition accuracies  of various methods on the CMU-PIE dataset with dimension $24 \times 24$. $10\%$ to $30\%$ of pixels in the bottom area are replaced with black.}
\label{Tab:PIE_bottom}
\end{table}

\begin{table}[h!]
\centering
\begin{tabular}{cccc}

\hline
\multirow{2}{*}{\textbf{Method}} & \multicolumn{3}{c}{\textbf{Percentage of image removed}} \\
  & 10\% & 20\% & 30\% \\
\hline
LRC & \textbf{100\%} & 91.7\% & 35.0\% \\
SRC & \textbf{100\%} & 93.3\% & 65.0\%  \\
CRC & 98.3\% & 87.5\% & 53.3\% \\
\hline
MM-estimator & 92.5\% & 7.5\% & 0.8\% \\
LMS & \textbf{100\%} & 80.8\% & 22.5\% \\
our method & \textbf{100\%} & \textbf{99.2\%} & \textbf{90.0\%} \\
\hline
\end{tabular}
\caption{Recognition accuracies of various methods on the CMU-PIE dataset with dimension $24 \times 24$. $10\%$ to $30\%$ of pixels in the middle area are replaced with black.}
\label{Tab:PIE_middle}
\end{table}

\subsubsection{Efficiency} For the problem of identifying outliers in face images, we compare the computation efficiency using the AR face dataset, as described in Section~\ref{subsub:disguise}. We vary the feature dimension from 54 to the original 19800. Table~\ref{Tab:time_face} shows the execution time for both the proposed fast algorithm and the original method. We can see that the fast algorithm outperforms the original in all situations. With low dimensional features, below 4800, the fast algorithm is approximately 20 times faster than the original. When the feature dimension increases to 19800, the original algorithm needs about 1.43 hours while the fast algorithm costs only about 6 minutes.

\begin{table}[t]
\centering
\begin{tabular}{cccccc}
\hline
\multirow{2}{*}{\textbf{Method}} & \multicolumn{5}{c}{\textbf{Feature dimension}} \\
 & 54 & 300 & 1200 & 4800 & 19800\\
\hline
original & 2.051 & 9.894 & 48.371 & 396.323 & 5150.137\\
fast & 0.113 & 0.566 & 2.564 & 18.811 & 361.689\\
\hline
\end{tabular}
\caption{Computation time (in seconds) of the original and fast algorithms when applied to the AR face dataset.}
\label{Tab:time_face}
\end{table}

\comment{I don't think these experiments make any sense. They should be redone or removed. --- I did these experiments because last time when we submitted our TIP paper (also about robust fitting) the reviewer said we should show the efficiency of our method
on the face recognition problem. Our main contribution is we apply the robust fitting method on the face recognition problem and in a efficient way.}
\subsection{Robust iris recognition}
Iris recognition is a commonly used non-contact biometric measure used to automatically identify a person. Occlusions can also occur in iris data acquisition, especially in unconstrained conditions, caused by eyelids, eyelashes, segmentation errors, etc. In this section we test our method against segmentation errors, which can result in outliers from eyelids or eyelashes. 
 Specially we take the ND-IRIS-0405 dataset \cite{ND06}, which contains 64,980 iris images obtained from 356 subjects with a wide variety of distortions.  In our experiment, each iris image is segmented by detecting the pupil and iris boundaries using the open-source package of Masek and Kovesi \cite{MasekIris03}. 
80 subjects were selected and 10 images from each subject were chosen for training and 2 images for testing. 
To test outlier detection, segmentation errors and artificial occlusions were placed on the iris area, in a similar fashion as \cite{Iris2011}. A few example images and their detected iris and pupil boundaries are shown in Figure~\ref{Fig:Iris}. 
 The feature vector is obtained by warping the circular iris region into a rectangular block by sampling with a radial resolution 20 and angular resolution of 240 respectively. These blocks were then are then resized to $10 \times 60$.  
 For our method, 10\% of pixels are detected and removed when test images are with only segmentation errors, and the corresponding additional number of pixels are removed for artificial occlusions. 

The recognition results are summarized in Table \ref{Tab:Iris}.  SRC used in \cite{Iris2011} for iris recognition and LRC  are compared with our method.
We can clearly see that the proposed method achieved the best results
with all feature dimensions. Specifically, our method achieves 96.3\%
accuracy when iris images are with only segmentation errors while
accuracy for LRC is 89.5\%. SRC performs well (95.6\%) for this task.
However when 10\% additional occlusions occur in the  test images,
performances for LRC and SRC drop dramatically to 43.8\% and 61.3\%
respectively, while our method still achieves the same result 96.3\% as before. When occlusions increase to 20\%, our method still obtains a high accuracy 95\% which is higher than those of LRC and SRC by 74.4\% and 43.7\% respectively.

\subsubsection{Efficiency}

Table \ref{Tab:time_iris} shows the computation time comparison of different methods on the iris recognition problem. Consistent with the former results, the proposed algorithm is much more efficient than the original algorithm.

\begin{figure}
\centering
\includegraphics[height=1.2cm]{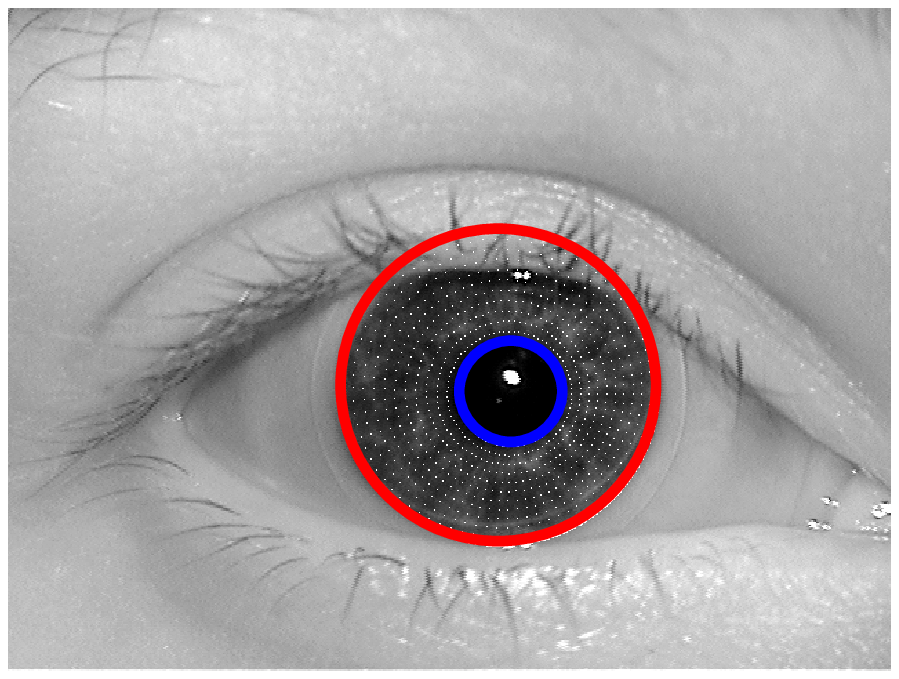}
\includegraphics[height=1.2cm]{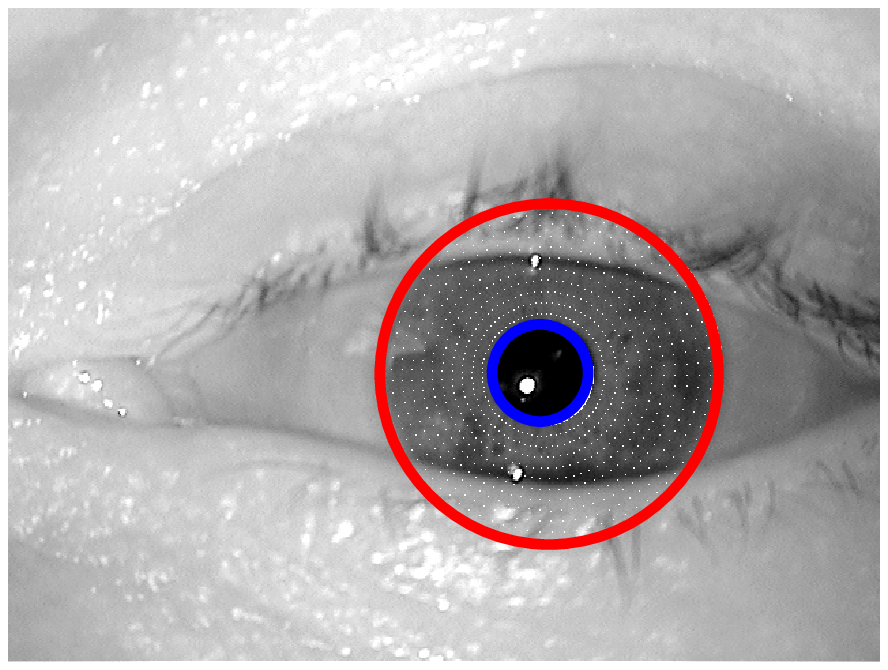}
\includegraphics[height=1.2cm]{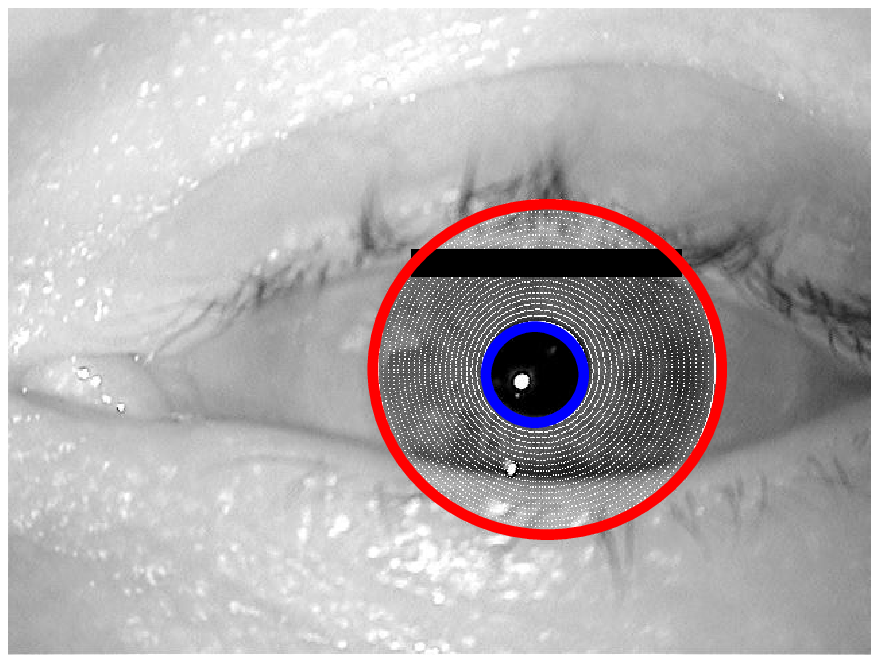}\\
\caption{Three example images from the ND-IRIS-0405 dataset. Features are extracted from the iris area which is between the detected iris and pupil boundaries as shown in red and blue circles respectively. The first two iris images suffer from increasing segmentation errors while the third one suffers from both segmentation error and artificial occlusion. }
\label{Fig:Iris}
\end{figure}


\begin{table}[t]
\centering
\begin{tabular}{ccccc}
\hline
\multirow{2}{*}{\textbf{Method}} & \multicolumn{3}{c}{\textbf{Percentage of artificial occlusion}} \\
 & 0\% & 10\% & 20\% \\
\hline
LRC        & 89.5\% & 43.8\% & 20.6\%  \\
\hline
SRC        & 95.6\% & 61.3\% & 51.3\% \\
\hline
our method & \textbf{96.3\%} & \textbf{96.3\%} & \textbf{95\%}\\
\hline
\end{tabular}
\caption{Classification accuracies  on the ND iris dataset. Note that percentage of artificial occlusion 0\% means there are only segmentation errors.}
\label{Tab:Iris}
\end{table}

\begin{table}[h]
\centering
\begin{tabular}{ccccc}
\hline
\multirow{2}{*}{\textbf{Method}} & \multicolumn{4}{c}{\textbf{Image resolution}} \\
 &  $5\times 20$&  $5\times 60$  &  $10 \times 120$ & $20 \times 240$ \\
\hline
original & 2.400 & 6.692 & 31.942 & 255.182\\
fast & 0.131 & 0.385 &1.941 & 14.906\\
\hline
\end{tabular}
\caption{Computation time (in seconds) comparison of the original and fast algorithms on the Iris data set with different feature resolutions (shown in the first row). }
\label{Tab:time_iris}
\end{table}

\section{Discussion}
The main drawback of our method is that one have to first estimate the
outlier percentage empirically as done by many other robust regression
methods. 
Actually, to our knowledge, for almost all the {\em outlier removal}
methods, one has to pre-set the outlier percentage or some other
parameters such as a residual threshold.  This is in contrast with
those robust regression methods using a robust loss such as the Huber
function or even nonconvex loss. These methods do not need to specify
the outlier percentage.

One may concern how the proposed algorithm will perform with
an under or over estimated $p$. Taking the AR dataset for example, we
evaluate our method by varying $p$ from 25\% to 45\%. From Table
\ref{Tab:vary_percentage}, we can see that the proposed method is not
very sensitive to the pre-estimated outlier percentage when $p$ is
over 30\%. We also observe that our method becomes more stable when
the image resolution is higher. This is mainly because, as mentioned
before, visual recognition problems generally supply large amount of
pixels by high dimensional images and consequently it is more crucial
to reject as many outliers as possible than to keep all inliers.

Different from our approach, there exist many robust estimators which
do not need to specify the outliers number, such as MM-estimator
\cite{yohai1987high}, LMS \cite{LMS1984} and DPM
\cite{park2012robust}. These methods can also be applied to visual
recognition problems as we have shown  in Section \ref{SEC:Exp}.
However, the difference is that our method can directly identify the
outliers, which can help compute more reliable residuals for
classification as shown in Section \ref{SEC:Exp}.  Of course, for these
methods, observations can be detected as outliers when the
corresponding standardized residuals exceed the cutoff point, which
also has to be determined {\em a priori} though.

\begin{table}[h]
\centering
\begin{tabular}{cccccc}
\hline
\multirow{2}{*}{\textbf{Dimension}} & \multicolumn{5}{c}{\textbf{Percentage of removed pixels}} \\
 & 25\% & 30\% & 35\% & 40\% & 45\%\\
\hline
$13 \times 10$ & 82\% & 85\% & 92.5\% & 95\% & 94.5\%\\
$20 \times 15$ & 98\% & 99.5\% & 100\% & 100\% & 99.5\%\%\\
\hline
\end{tabular}
\caption{Recognition accuracies  on the AR dataset  with different percentage of outliers removed by our method. The feature dimension is set to $13 \times 10$ and $13 \times 10$.}
\label{Tab:vary_percentage}
\end{table}

\section{Conclusion}
\label{SEC:conclusion}
In this work, we have proposed an efficient method for minimizing the
$L_\infty$ norm based robust least squares fitting,
and hence for iteratively removing outliers. The efficiency of the method allows it to be
applied to visual recognition problems which would normally be too large for such an approach.
The method takes advantage of the nature of the $L_\infty$ norm to break the main problem
into more manageable sub-problems, which can then be solved via standard, efficient, techniques.

The efficiency of the technique and the benefits that outlier removal can bring to visual
recognition problems were highlighted in the experiments, with the computational efficiency
and accuracy of the resultant recognition process easily beating all other tested methods.

Like many other robust fitting methods, the proposed
method needs a parameter: the number of  outliers to be removed.
One may heuristically determine this value. Although it is not very sensitive for the visual recognition problems, in the future, we
plan to investigate how to automatically estimate the outlier rate in the noisy data.

\section*{Acknowledgements} 

    This work was in part supported by ARC Future Fellowship FT120100969.
    F. Shen's contribution was made when he was visiting The
    University of Adelaide. 

    All correspondence should be addressed to C. Shen
    (\url{chunhua.shen@adelaide.edu.au}).

\bibliographystyle{elsarticle-num}
\bibliography{CSRef}

\end{document}